\newtheorem{theorem}{Theorem}
\newtheorem{lemma}[theorem]{Lemma}
\begin{document}

\title{Comprehensive and Efficient Data Labeling via Adaptive Model Scheduling}


\author{
\IEEEauthorblockN{Mu Yuan, Lan Zhang, Xiang-Yang Li}
\IEEEauthorblockA{\textit{University of Science and Technology of China} \\
Hefei, China \\
ym0813@mail.ustc.edu.cn, zhanglan@ustc.edu.cn, xiangyangli@ustc.edu.cn}
\and
\IEEEauthorblockN{Hui Xiong}
\IEEEauthorblockA{\textit{Rutgers University} \\
Newward, USA \\
xionghui@gmail.com}
}

\maketitle

\begin{abstract}
Labeling data (e.g., labeling the people, objects, actions and scene in images) comprehensively and efficiently is a widely needed but challenging task. 
Numerous models were proposed to label various data and
many approaches were designed to enhance the ability of deep learning models or accelerate them. 
Unfortunately, a single machine-learning model is not powerful enough to extract various semantic information from data.
Given certain applications, such as image retrieval platforms and photo album management apps,
 it is often required to execute a collection of models to obtain sufficient labels.
With limited computing resources and stringent delay, 
  given a data stream and a collection of applicable resource-hungry deep-learning models,
 we design a novel approach to adaptively schedule a subset of these models to execute on each data item, 
 aiming to maximize the value of the model output (e.g., the number of high-confidence labels).
Achieving this lofty goal is nontrivial since a model's output on any data item is content-dependent and unknown until we execute it.
To tackle this, we propose an \textit{Adaptive Model Scheduling} framework, consisting of
1) a deep reinforcement learning-based approach to predict the value of unexecuted models by mining semantic relationship among diverse models, and
2) two heuristic algorithms to adaptively schedule the model execution order under a deadline or deadline-memory constraints respectively.
The proposed framework doesn't require any prior knowledge of the data, which works as a powerful complement to existing model optimization technologies.
We conduct extensive evaluations on five diverse image datasets and 30 popular image labeling models 
to demonstrate the effectiveness of our design: our design could save around 53\% execution time without loss of any valuable labels.
\end{abstract}





\section{Introduction}
\label{sec:intro}
With the explosive growth of data volume and the rapid development of the AI industry, it is an appealing task to comprehensively label large amounts of data as fast as possible. 
For example, annotating each image with a comprehensive collection of semantic labels can power a wide variety of functionalities, such as multi-label image retrieval and image classification \cite{li2017deep,zhang2018instance,zhao2015deep}, and provide more possible avenues of making the most of images. 
Image retrieval platforms usually employ a series of machine learning models to provide as many labels as possible to describe each image to improve the quality of search results. By identifying faces, landmarks, scenes, objects, emotions and events in photos, many smartphones support thousands of searchable keywords to find photos in one's album.
On data trading platforms~\cite{li2018can}, the richer the label of a data set, the higher the price of the data set.
To facilitate the extraction of rich labels, there are two main streams of previous work.
One stream of efforts have been devoted to enhancing a single model's capability.
Multi-label learning~\cite{han2015learning,yang2018complex} and multi-task learning~\cite{kaiser2017one,ma2018modeling} have been proposed to enable a single model to extract more complex semantics of input data.
The other stream of work focus on accelerating the model execution by designing a variety of methods, including model compression via parameter pruning \& sharing~\cite{chen2016compressing}, 
network architecture optimization~\cite{wu2017squeezedet} 
and adaptive model configuration~\cite{jiang2018chameleon}.

Despite the efforts made by existing work to provide us with increasingly powerful and lightweight deep learning models for various tasks,
there still exist several major limitations hindering comprehensive and efficient data labeling for many applications.
(1) The limited ability of a single model: 
 one single model can usually only capture features of certain aspects of the data.
 Thus, in many cases, e.g., image retrieval, image analysis for video surveillance, and data trading, it is inevitable to execute a series of deep learning models to achieve a broad understanding of data.
Usually, for these tasks, the more diverse models utilized, the better the services provided. 
(2) Limited computing resources: 
one simple policy is to execute all possible applicable deep learning models on each piece of input data. Unfortunately, this is infeasible in many cases due to the limited computing resources and the intolerable long delay. 
For example, an advanced human pose estimation model~\cite{cao2018openpose} consumes more than 10GB GPU memory. 
(3) Limited expertise of users: facing a large number of available deep learning models and massive raw data, it is quite difficult, if not impossible, for a user to select an optimal subset of models for achieving
 a comprehensive labeling as well as a low computing overhead. To tailor different deep learning models for different input data is also a very difficult and time-consuming task.
(4) Diverse and unknown content of raw data:
even if an oracle can select a series of models suitable for the upcoming data, executing all those models for each piece of input data can still result in serious computing waste.
As the example shown in Fig.~\ref{fig:intro}, though every model outputs some valuable labels for some input images, 16/30 model executions didn't generate anything useful.
It is nontrivial to avoid such waste since the content of the data is unknown before model executions. 
As a result, given a set of models, how to extract the maximum value (i.e., as many high-confidence labels as possible) from large-scale data at a minimum cost remains a very challenging problem.

\begin{figure}[t]
\begin{center}
	\includegraphics[width= 0.85\linewidth]{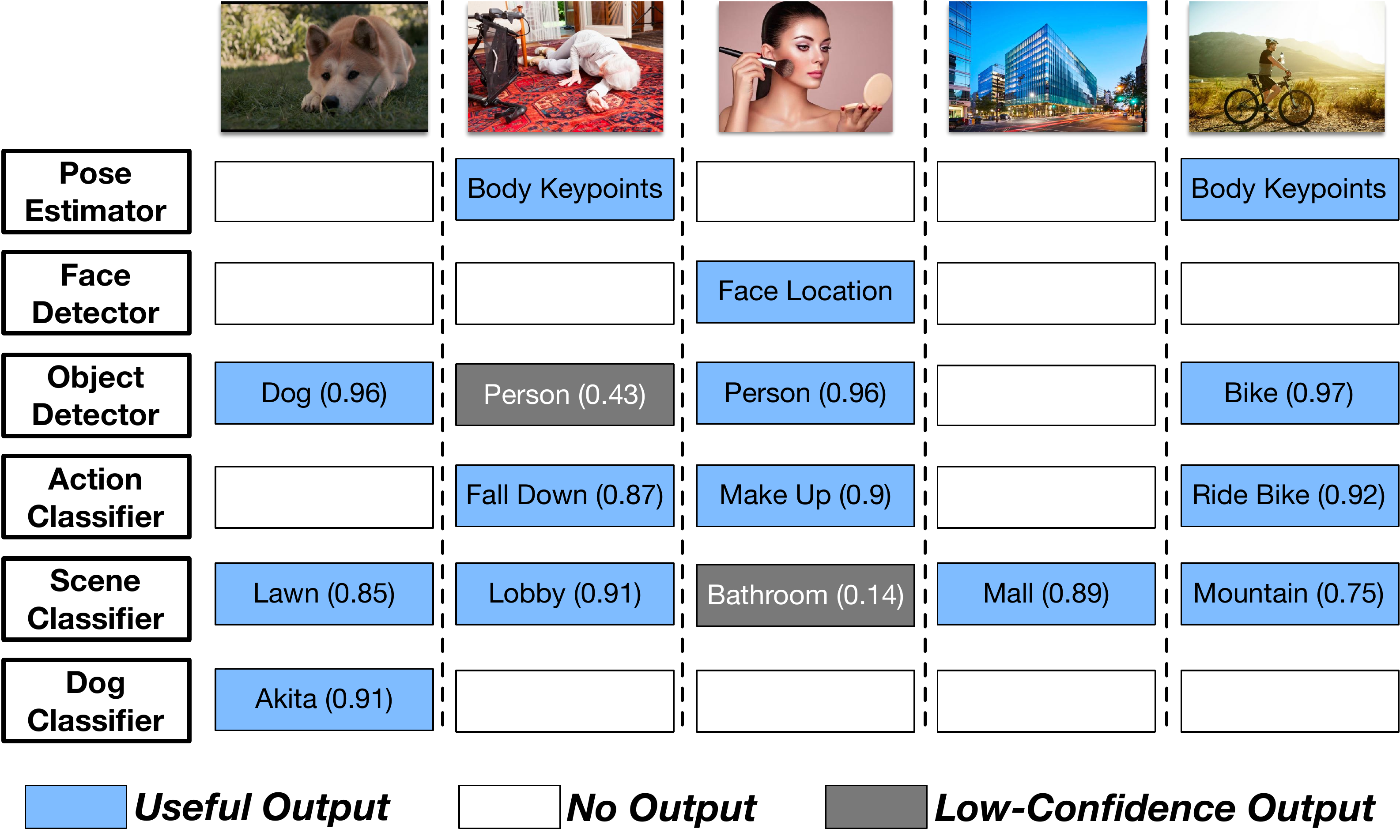}
	\caption{Output of six models executed on the same set of images from a public dataset. 
	}
	\label{fig:intro}
	\vspace{-0.35in}
	\end{center}
\end{figure}

The blue boxes in Fig.~\ref{fig:intro} show the significance of executing multiple diverse models to achieve a broad understanding of data, while the white and grey boxes reveal that a large portion of computing resources could be wasted without careful model selection. 
Towards comprehensive and efficient data labeling, in this work we propose a novel \textit{adaptive model scheduling} framework, which works orthogonally to model enhancing and accelerating methods.
Specifically, given a set of deep learning models and a stream of raw data to be processed, we propose to adaptively schedule model execution on each piece of input data to maximize the value of extracted labels under some constraints on computing resources and delay.
This ambitious goal, however, is nontrivial to achieve due to the following reasons.
\textbf{First}, an optimal scheduling policy should only select models that will output valuable labels, i.e., labels with high confidence, for each piece of data. 
A model's output, however, highly depends on the content of the data, which is generally unknown in advance and hard to predict.  
As the example in Fig.~\ref{fig:intro}, 
 the ideal policy should only conduct those blue-box model executions. 
Without knowing if there is a human body in an image, however, it is difficult to decide whether or not to execute the computation-intensive pose estimator.
\textbf{Second}, for each piece of raw data, even if we can somehow obtain the subset of models that will generate valuable labels, it is still theoretically NP-hard to schedule them to maximize the value of outputs under multi-dimensional constraints.
The most relevant well-defined problem is the submodular function maximization with multi-dimensional knapsack constraints,
 however, different from which the values of model outputs are unknown in our model scheduling problem.
Even when all output values are given in advance, which is the simplified version of our case, the problem remains NP-hard.  

Given a stream of data, a simple case is that the data could be partitioned into chunks (such as segments of video), and each chunk 
has a certain correlation in its content.
For such kind of data, our extensive evaluations confirmed that a simple exploration-exploitation solution works extremely well: at the beginning of the chunk, we explore almost all possible models and find the best subset
 of models for this chunk, then for the remaining of this chunk, we run this subset of models, i.e., exploitation.
The challenging part is the other case when the data items are not correlated, e.g., a set of randomly taken images.
To tackle such a question, 
 we explore the implicit semantic relationship among several deep learning models, which is inspired by the observation that some executed models' outputs provide hints of other models' outputs.
For example, if a pose estimator fails to detect any body keypoint, then a face detector is very likely to output nothing and should not be executed.
A straightforward solution is to manually design such execution rules to characterize the dependency among different output labels.
Our experimental results show that handcrafted rules are cumbersome and inefficient when there are tens of models, which support more than one thousand labels in our experiments.

\textbf{Contributions:}
In this work, we propose an adaptive model scheduling framework, consisting of
1) a novel deep reinforcement learning (DRL) based method to model the semantic relationship among models and predict the output of unexecuted models based on outputs of executed ones, and
2) two heuristic scheduling algorithms to maximize the value of output labels under a deadline or deadline-memory constraints.
Our adaptive model scheduling framework gathers the power of existing models to achieve comprehensive and efficient labeling of large-scale data.
It does not require users to have deep learning related expertise or know the content of the raw data in advance.
To the best of our knowledge, this work is the first step to explore the relationship among multiple models.
We conduct extensive evaluations on large-scale and highly diverse images from five public datasets using 30 popular deep learning models for 10 visual analysis tasks.
Our experimental results show that the proposed adaptive model scheduling framework could save 53.1\% execution time when we need a 100\% recall of all valuable labels.
We could save about 70.0\% execution time when we only need  80\% recall of all valuable labels.
Given the 0.5s delay budget for each image, our proposed algorithms could improve the obtained output value by 132-310\% compared with the randomly scheduling of models.


\section{Data-driven Analysis}
\label{sec:potential}
This work is motivated by increasing demands for comprehensive and efficient data annotation.
The main bottlenecks of this task are the limited ability of a single model and limited computing resources.
Instead of enhancing the ability of every single model or accelerating its execution, we propose to select the optimal combination of off-the-shelf models to be executed for each piece of raw data, which is to schedule the execution of models adaptively.
To demonstrate the necessity and potential of this work, we conduct a data-driven analysis of comprehensive image labeling for applications like image retrieval and photo management.
We collected 394,170 images from three well-known public datasets (MSCOCO 2017~\cite{lin2014microsoft}, Places365~\cite{zhou2017places} and MirFlickr25~\cite{huiskes2008mir}).
MSCOCO dataset is collected for the object detection task; Place365 dataset is collected for the place classification task; MirFlickr25 consists of images from the social photography site Flickr.
We believe the content of these three datasets is diverse enough to represent most real-life cases.
We deployed 30 deep learning models for 10 different visual analyzing tasks, such as objection detection, face detection, action classification, and place classification.
In all, these models extract 1104 labels.
See Section~\ref{sec:exp}.A and Table~\ref{tab:alltask} for more details for datasets and models.


We executed all 30 models on every image and collected the output results,
consisting of labels and their corresponding confidences.
We refer to such an ``executing all models on each image'' procedure as ``no policy''.
We analyze the drawbacks of the naive ``no policy'' model execution. Using 4 Tesla P100 GPU, it took 5.16s on average to process one image on a GPU card and about \textbf{6 days} to analyze all 394,170 images.
The cost is unacceptable for many delay-sensitive tasks and will increase significantly as the number of images/models grows. 
Fortunately, not all computing costs are necessary.
Fig.~\ref{fig:intro} reveals that there is a significant waste of computing resources for the ``no policy'' model execution since a large portion of model executions only generate low confidence outputs or even nothing.
To further quantify the waste,
 we simulate a process of data labeling with an ideal ``optimal policy''.
Based on the complete execution results we obtained, the ``optimal policy'' only selects the model executions which generate high-confidence outputs.
As shown in Fig.~\ref{fig:dda}, the ``optimal policy'' requires 1.14s to process one image on average, which costs only 22.1\% time of the ``no policy''.  It speeds up the total labeling process from 6 days to about 1.3 days.

\begin{figure}[t]
	\begin{minipage}{.5\linewidth}
		\centering
		\includegraphics[width=\linewidth]{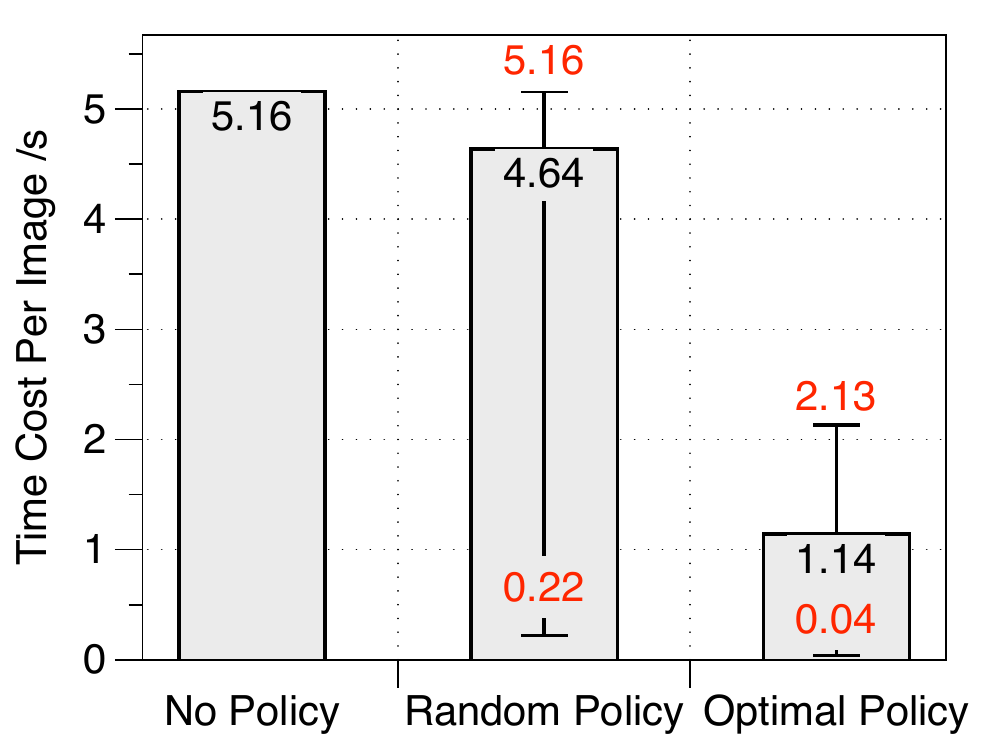}
	\end{minipage}\hfill
	\begin{minipage}{.5\linewidth}
		\centering
		\includegraphics[width=\linewidth]{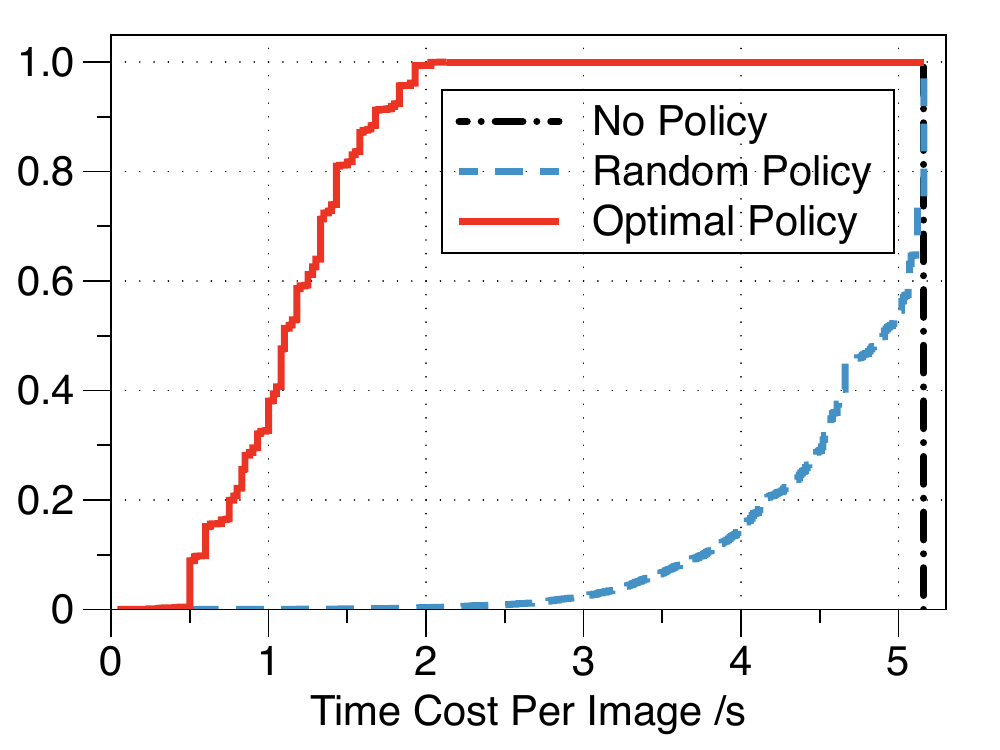}
	\end{minipage}
	\caption{Time cost of three policies to obtain all valuable labels for each image (Left: average time cost per image; Right: CDF of the time cost per image).}
	\vspace{-0.2in}
	\label{fig:dda}
\end{figure}

Our data-driven analysis shows that leveraging a set of diverse models can greatly enrich the data annotation without requiring any preknowledge of the raw data or expertise of deep learning models. An optimal model execution policy can save up to \textbf{77.9\%} computing cost without the loss of any valuable labels. The significant benefits and huge improvement potential strongly motivate us to search for the optimal policy.  
To design an optimal model scheduling policy, however, is nontrivial due to the reason that it is very challenging to estimate a model's output for a piece of raw data before the actual execution.  
Here, we simulated a ``random policy'' using the complete outputs of ``no policy'', which executes models in a random order for each data until all valuable labels for this data have been recalled.
Fig.~\ref{fig:dda} shows that the ``random policy'' does not help much, which only saves 0.52s per image on average. 
In the CDF of time cost per image, we can see a huge gap between the ``optimal policy'' and the ``random policy''.
It reveals that an effective model scheduling policy requires a much more insightful design.

\section{Problem \& Design Overview}
\label{sec:submod}
In this work, we aim to design such an optimal policy that adaptively schedules the execution of a set of off-the-shelf  deep learning models for each piece of input data to maximize the value of outputs under computing resources and delay constraints.
For the policy to be widely applicable, it doesn't need any modification of the existing deep learning models, nor knowledge of the input data content in advance.
In this section, we first present the formal definition of the problem and then introduce the overview of our design.

\subsection{Problem Definition}
Given a set of available models $M$, let $L(m)$ denote the supported labels for $\forall m \in M$.
Then the whole set of labels is $L(M)=\bigcup_{m \in M} L(m)$.
Each label $l_i \in L(M)$ has a profit $p_i > 0$, which indicates the value of the label $l_i$ to the user.  
For an input data $d$, let $O(S, d)$ denote the output of a subset of models $S \subseteq M$ executed on the data $d$, which is a set of labels.
We evaluate the output by the following function:
\begin{equation}
\label{eq:eval}
f(S,d) = \sum_{l_i \in  O(S,d)} p_i
\end{equation}
The objective is to select such a model subset $S$ that maximizes the evaluation function $f(S, d)$, while the computing costs of $S$ are under some constraints, which can be formalized as follows:
\begin{equation}
\label{eq:obj}
\begin{aligned}
& \max_{S \subseteq M} f(S,d)\\
\text{s.t. } & \text{constraints on } S
\end{aligned}
\end{equation}
Some common constraints include time delay caused by executing $S$ and GPU memory occupied by $S$. 

\begin{lemma}
The evaluation function $f$ is a non-negative, non-decreasing submodular function.
\end{lemma}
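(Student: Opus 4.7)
The plan is to reduce the claim to the fact that weighted coverage functions over a universe of elements are submodular, monotone, and non-negative. The crucial structural observation is that the model outputs composes by union: since each model $m$ is executed independently on the data item $d$, we have
\begin{equation}
O(S,d) \;=\; \bigcup_{m \in S} O(\{m\}, d),
\end{equation}
so running more models can only enlarge the set of returned labels. I would record this as a preliminary remark before touching the three properties.

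Non-negativity is then immediate from the assumption $p_i > 0$ for every label, together with the convention that the empty sum is $0$, giving $f(\emptyset, d) = 0$ and $f(S,d) \ge 0$ in general. For monotonicity, I would take $S \subseteq T \subseteq M$, invoke the union decomposition to deduce $O(S,d) \subseteq O(T,d)$, and conclude $f(S,d) \le f(T,d)$ because the additional labels contribute only non-negative profits.

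For submodularity I would verify the diminishing-returns form directly. Fix $S \subseteq T \subseteq M$ and $m \in M \setminus T$. The marginal gain of adding $m$ to any subset $U$ is
\begin{equation}
f(U \cup \{m\}, d) - f(U, d) \;=\; \sum_{l_i \in O(\{m\}, d) \setminus O(U, d)} p_i.
\end{equation}
Because $O(S,d) \subseteq O(T,d)$, we have the set inclusion $O(\{m\},d) \setminus O(T,d) \subseteq O(\{m\},d) \setminus O(S,d)$, and since all profits $p_i$ are positive, the sum over the smaller set cannot exceed the sum over the larger one. This gives $f(S \cup \{m\}, d) - f(S,d) \ge f(T \cup \{m\}, d) - f(T,d)$, which is exactly submodularity.

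The only subtle point, and therefore what I would state explicitly as an assumption of the model, is the decomposition $O(S,d) = \bigcup_{m \in S} O(\{m\}, d)$; once that is granted the argument is essentially a textbook specialization of weighted set cover. No constraint on $S$ enters the proof, so the lemma holds uniformly before any deadline or memory constraints are imposed, which is exactly what is needed to justify invoking submodular-maximization machinery later.
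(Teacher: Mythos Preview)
Your proof is correct and follows the standard weighted-coverage argument; the paper itself simply writes ``The proof is omitted due to the triviality'' and provides no details, so there is nothing substantive to compare against. Your explicit identification of the union decomposition $O(S,d) = \bigcup_{m \in S} O(\{m\},d)$ as the key structural assumption is a useful clarification that the paper leaves implicit.
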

\begin{proof}
The proof is omitted due to the triviality.
\end{proof}
    
    
    

Note that, if the supported labels of different models do not overlap, the evaluation function becomes a modular function, i.e. the inequality characterizing submodularity holds with equality.
Traditional submodular function optimization problems commonly assume that the cost of the evaluation function itself is negligible, e.g., the weighted coverage function.
In our problem, however, to calculate the evaluation function $f$ requires actual execution results of computation-intensive deep learning models.
But once we have executed those models, there is no need for scheduling them anymore.
This dilemma makes classic submodular function optimization approaches~\cite{krause2014submodular, sviridenko2004note, golovin2011adaptive} infeasible to solve our problem.
Therefore, the biggest challenge here is to estimate the evaluation function before the model execution.
Moreover, even we somehow know models' outputs in advance, how to schedule the execution of them to maximize the evaluation function under multi-dimensional constraints remains a hard problem.

\begin{figure}[t]
\centering
	\includegraphics[width=0.9\linewidth]{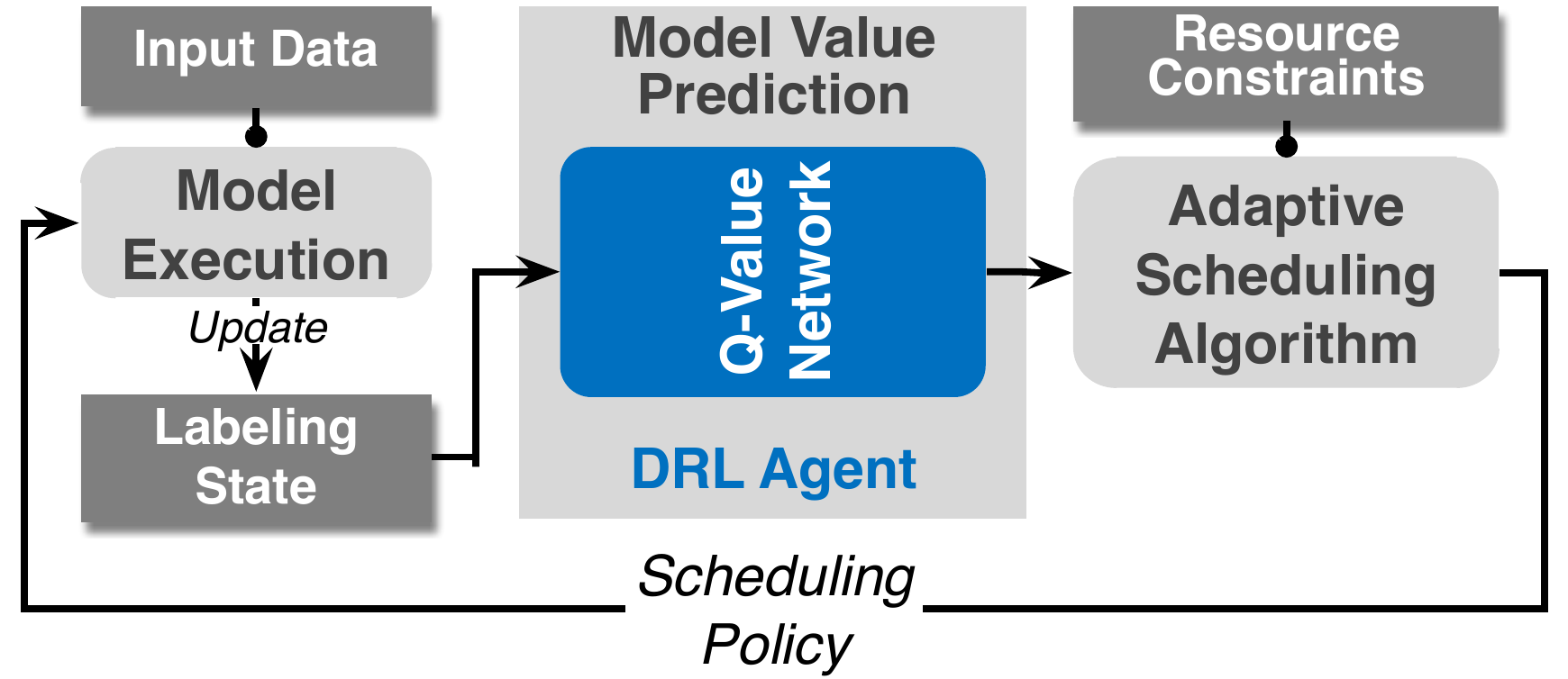}
	\caption{Overview of our proposed \textit{adaptive model scheduling} framework. }
	\label{fig:overview}
	\vspace{-0.15in}
\end{figure}

\subsection{Core Idea and Design Overview}
\label{subsec:overview}

Based on the observation that executed models' outputs could provide hints for the outputs of unexecuted ones, we propose to utilize the implicit \textit{semantic relationship} among labeling capacities of models to estimate the evaluation function before the actual execution.

\subsubsection{Handcrafted rule-based solution.}
A straightforward solution to characterize the semantic relationship is to manually design some model execution rules based on common sense, which increase (or decrease) the probabilities to execute models that ought to (or not to) output valuable labels.
For instance, if the label ``undersea'' is returned by a scene classifier with high confidence, the probability of running a fish classifier should be increased, while a household object detector should be assigned with a low probability for execution.
Table~\ref{tab:rules} gives more examples of handcrafted rules. 
The number of models and the supported labels, however, could be too large to design a set of effective rules.
In our implementation, 30 deep learning models support 1104 labels in total. 
Designing execution rules for a system at that scale is cumbersome and ineffective. 
The evaluation results in Section~\ref{subsec:exp3} show that handcrafted rules slightly improve the performance compared with the ``random policy'', but still leaves a large room for optimization.

\subsubsection{Reinforcement learning-based framework.}
We propose a novel deep reinforcement learning (DRL) based approach to better characterize the complex semantic relationship among models and predict the value of unexecuted models.
With the estimated value of the evaluation function, we design scheduling algorithms to optimize the model execution under different constraints. 
As shown in Fig.~\ref{fig:overview}, our proposed framework consists of two components:

\textbf{(1) Model value prediction.}
A DRL model is trained to map the labeling state, i.e., outputs of currently executed models, to values of unexecuted models.
When each piece of input data arrives, the labeling state is initialized to an empty set. 
In each iteration, the DRL agent predicts the values of remaining unexecuted models based on the current state.   

\textbf{(2) Adaptive model scheduling.}
According to the predicted values of unexecuted models, the adaptive scheduling algorithm determines the model (or a set of models) to execute in this iteration based on the current computing resources and/or delay constraints.
Two common types of constraints, delay and GPU memory, are considered in this work.
The labeling state is updated according to the actual model execution results and a new iteration begins.

This ``prediction-scheduling-execution'' loop continues until a stopping condition is reached, e.g., the running time reaches the deadline constraint.
Our two-components framework disentangles models' semantic relationship from their computing resource costs.
The model value prediction component only cares about whether the model will output valuable labels or not, leaving the costs issue to the scheduling algorithms.
In this way, the learned semantic relationship can be adopted flexibly in various situations with different resource constraints.
In the following sections, we present the design of each component in detail. 

\section{Model Value Prediction}
\label{sec:semantic}
The first key component of our proposed adaptive model scheduling framework is to predict the output values of unexecuted models based on the outputs of executed ones.
The prediction accuracy is critical to the subsequent scheduling algorithms.
To confirm the interactive characteristic of the proposed framework, we propose a DRL based method for the mode value prediction task.
Deep reinforcement learning has been applied in many fields, including robotics control~\cite{kober2013reinforcement}, video games~\cite{mnih2013playing}, system tuning\cite{zhang2019end}, etc.
To our best knowledge, this work is the first step to explore the DRL for mining semantic relationships among labeling capacities of multiple models. 

There are three main parts of a DRL method, including the environment observation, the action space, and the reward feedback.
In our problem, we model the \textit{labeling state} as the environment observation, which is a $n$-dimension binary vector ($n$ is the number of supported labels, i.e., $|L(M)|$).
The $i$-th binary value 1 or 0 indicates the state that the $i$-th label has been or has not been output by executed models.
Each model in $M$ is regarded as an action and the complete set of models compose a $|M|$-dimension discrete action space.
In every iteration, after the DRL agent selects an action/model $m\in M$, the system executes the model on the input data $d$.
According to the execution result $O(m,d)$, the labeling state is updated and the agent receives a reward.
The reward function is the key to the performance of the DRL agent.
We cannot directly adopt Eq.~(\ref{eq:eval}) as the reward function since it evaluates the final value of the complete output set when the execution stops.
The DRL agent, however, needs a reward function for intermediate feedback to each selected action.

\subsection{Reward Function}
The confidence $l_i.conf$ represents the model's judgment of the accuracy of label $l_i$, 
which can be adopted as the profit $p_i$.
However, based on the confidence of each output label, directly using Eq.~(\ref{eq:eval}) as the reward function has some shortcomings.
The first problem is that the number of output labels influences the reward overly, which varies among different deep learning models.
As an example, an action classification model usually outputs only one label, while a face landmark detector outputs at most 70 keypoint labels for each face in the image.
In this case, the agent would easily tend to prefer the face landmark detector regardless of the data content, due to the overwhelming returned value.
We use a logarithmic function to mitigate the bias caused by different numbers of models' output labels.
Our experimental results show that other smoothing functions such as the average confidence of a model's output labels can achieve a similar effect as long as the values of different models are in the same order of magnitude.
On the other hand, considering the different requirements of diverse applications for model priorities, we introduce a parameter $\theta_m$ as the user-defined priority for a model $m$.
The greater $\theta_m$ is, the higher priority the model $m$ has.
For example, in a surveillance video monitoring system, when computing resources are insufficient, an abnormal event detection model should have a higher priority than a common object detector.
Experimental results in Section~\ref{subsec:exp5} show that by adjusting the parameter $theta$, we can effectively control the model execution priority without sacrificing the optimization performance.

Taking all the aforementioned factors into consideration, 
 the reward function is defined as:
\begin{equation}
{\small
\label{eq:reward}
r(m,d) =\left\{ 
\begin{aligned}
&\ln(\theta_m \sum_{l_i\in O'(\{m\},d)}{l_i.conf} + 1) &,O'(m,d)\neq\emptyset\\
&-1 &,O'(\{m\},d) = \emptyset
\end{aligned}
\right.
}
\end{equation}
We define $O'(m,d)$ as the set of new labels output by a running model $m$, which have not been output by other models yet.
Since the supported labels of different models may overlap, $O'(\{m\},d) \subseteq O(\{m\},d)$.
To avoid selecting duplicated or valueless models, when $O'(\{m\},d) = \emptyset$, the agent receives a punishment $-1$ as the feedback value.

\subsection{Agent Learning}
\label{subsec:agent}
The DRL agent needs to learn the mapping from the environment observation to the action space, which is the mapping from the current labeling state to the selection of the model to be executed.
The traditional table-based approach cannot handle this task since the mapping space increases exponentially with the number of labels and models.
Facing the complexity of the problem, we adopt a deep Q-value network (DQN)~\cite{mnih2013playing} as the observation-action mapping function.
The DQN architecture can be adjusted to adapt to different sizes of observation and action spaces.
In our implementation, a hidden dense layer with 256 neurons activated by ReLu is used to cope with a 1104-dimension observation space and a 30-dimension action space.
For training the agent, we implement four popular DRL approaches: Original DQN~\cite{mnih2013playing}, Double DQN~\cite{van2016deep}, Dueling DQN~\cite{wang2015dueling} and Deep SARSA~\cite{corazza2015q}.
We compare their performance in Section~\ref{subsec:exp2}.
Theoretically, the proposed DRL-based agent can be trained by any Q-value network-based DRL approach.

\textbf{END action.}
The classic epsilon-greedy policy is applied in the training process, which either selects the action with the maximal Q value or randomly selects an action with the probability epsilon.
In this way, however, the training process is hard to reach convergence.
The reason is that, for each input, after some iterations, the agent will reach the state that all valuable labels have already been recalled.
So according to the reward function Eq.~\ref{eq:reward}, any further action will bring back a punishment, which obstructs the agent against improving subsequent actions.
To fix this problem, we add an \textbf{``END''} action, whose reward is zero, to indicate the end of the selection process for the current input.
Then the agent has the option to select the ``END'' action to avoid further punishment when no valuable model is left.
Results of experiments on agent learning confirm that the ``END'' action effectively quickens the velocity of convergence.
Note that, the ``END'' action is only used in the training process.
In the scheduling process, the agent stops when certain computing resource runs out, e.g., running time exceeds the delay budget.

\section{Adaptive Model Scheduling}
\label{sec:adaptive}
The second key component of our framework is adaptive scheduling algorithms under various computing resources constraints.
Given a set of available deep learning models and input data, the trained DRL agent provides the value prediction for unexecuted models based on the current labeling state.
Based on the predicted values and remaining resources, a scheduling algorithm determines the execution policy that aims to maximize the evaluation function $f(S,d)$.
It is worth mentioning that if there is no resource constraint, the DRL agent itself can schedule the model execution in a greedy manner which selects the remaining model with maximal Q-value at each iteration and terminates when all models have been executed.
In this work, we consider the two most common constraints for data labeling tasks: deadline and limited memory.
We first analyze the scheduling problem in a single-processor setting with a deadline constraint and then in a multi-processor setting with both deadline and memory constraints.

\subsection{Deadline Constraint}
In a single processor case, models can only be executed serially.
The deadline constraint is set for each input data, which is a common requirement of delay-sensitive systems.
For example, a video surveillance system needs to respond to suspicious events or objects in seconds.
With the deadline constraint, our optimization problem defined in Eq.~(\ref{eq:obj}) is specified as follows:
\begin{equation}
\begin{aligned}
& \max_{S \subseteq M} f(S,d)\\
\text{s.t. } &\sum_{m \in S} m.time \leq B_{time}
\end{aligned}
\end{equation}
, where $m.time$ is the execution time of model $m$ and $B_{time}$ is the constraint of the total execution time for all selected models in $S$.

The most relevant problem is the submodular function maximization with a knapsack constraint, which is NP-hard.
A commonly used heuristic approach to solve the knapsack problem is the cost-profit greedy algorithm, which selects the model $m$ that maximizes $\frac{f(S\cup \{m\},d) - f(S,d)}{m.time}$ at each iteration.
It is obvious that in the worst case, the performance of this algorithm can be arbitrarily bad.
A greedy algorithm combined with the partial enumeration of all subsets of size 3 has been proved to achieve a $1-1/e$ approximation~\cite{sviridenko2004note}.
In the traditional knapsack problem, the value of each item is known (easily calculated) and fixed.
In our problem, however, the real model value is unknown and we can only obtain the exact value after model execution.
Therefore, the enumeration-based algorithm is infeasible for our problem, not to mention its high computational complexity ($O(|M|^5)$)~\cite{sviridenko2004note}.
Moreover, the prediction of a model's value is not fixed, which changes with the labeling state.
The uncertainty and dynamics of the model value make the scheduling task particularly challenging. 
In this work, we propose to adopt the cost-profit greedy algorithm by using the Q value of each unexecuted model as its estimated profit at each iteration. 
Algorithm~\ref{alg:ddl} demonstrates the scheduling policy in detail. 
This efficient algorithm achieves near-optimal performance in our extensive evaluations ( see Section~\ref{subsec:exp6}), which also verifies the effectiveness of our DRL-based model value prediction method.

\renewcommand{\algorithmicrequire}{\textbf{Input:}}
\renewcommand{\algorithmicensure}{\textbf{Output:}}
\begin{algorithm}[t]
	\caption{Model scheduling under deadline constraint.}
	\label{alg:ddl}
	\begin{algorithmic}[1]
		\REQUIRE model set $M$, time budget $B_{time}$, DRL agent $Q$
		\ENSURE model subset $S$
		\STATE $S \leftarrow \emptyset$
		\WHILE {$B_{time} > 0$}
            \STATE{ Filter out models that $m.time > B_{time}$}
    		\STATE	$m^*  \leftarrow \arg\max\limits_{m\in M\backslash S} \frac{Q(m,d)}{m.time}$
    		\STATE	$S \leftarrow S \cup \{m^*\}$
    		\STATE	$B_{time} \leftarrow B_{time} - m^*.time$
		\ENDWHILE
		\RETURN $S$
	\end{algorithmic}
\end{algorithm}

\subsection{Deadline-Memory Constraint}
In a multi-processor case, multiple deep learning models can be executed in parallel on a shared-memory computing platform.
A two-dimension constraint, a deadline and memory constraint, is considered for each input data.
The tangible formulation of this optimization problem is:
\begin{equation}
\begin{aligned}
& \max_{S \subseteq M} f(S,d) \text{, where } S = \bigcup_{i=1}^N S_i\\
\text{s.t. } &\sum_{i=1}^N S_i.time \leq B_{time}\\
& \sum_{m \in S_i} m.mem \leq B_{mem} \ (\forall i, 1\leq i \leq N)
\end{aligned}
\end{equation}
Given an input, let the model scheduling process have $N$ iterations in total.
$S_i$ is the set of models being executed at the $i$-th iteration.  
Let $S_i.time$ denote the running time of the $i$-th iteration and $B_{time}$ denote the acceptable total execution time for all selected models $S$.
The memory cost of a model $m$ is denoted as $m.mem$, which is measured by the peak memory usage in our experiments.
At any time, the total memory usage of running models should not exceed the memory budget $B_{mem}$. 


\begin{algorithm}[t]
	\caption{Model scheduling under deadline-memory constraints.}
	\label{alg:ddl-mem}
	\begin{algorithmic}[1]
		\REQUIRE model set $M$, time budget $B_{time}$, memory budget $B_{mem}$, DRL agent $Q$
		\ENSURE model scheduling policy $S$
		\STATE $S \leftarrow [\ ]$, $TimeCost \leftarrow 0$, $S_t \leftarrow \emptyset$
		\WHILE {$TimeCost < B_{time}$}
    		\STATE Filter out models that $m.mem > B_{mem}$
    		\STATE	$m^*_1 \leftarrow \arg\max\limits_{m\in M \backslash S} \frac{Q(m, d)}{m.time \times m.mem}$
    		\STATE	$S_t \leftarrow S_t \cup \{m^*_1\}$
    		\STATE $B_{time}^{t} \leftarrow TimeCost + m^*_1.time$
    		\STATE Filter out models by temporary deadline $B_{time}^{t}$
    		\WHILE {$B_{mem} > 0$}
        		\STATE $m^*_2 \leftarrow \arg\max\limits_{m\in M \backslash S} \frac{Q(m,d)}{m.mem}$
        		\STATE $S_t \leftarrow S_t \cup \{m^{*}_2\}$
        		\STATE $B_{mem} \leftarrow B_{mem} - m^{*}_2.mem$
    		\ENDWHILE
    		\STATE $S.append(S_t)$
    		\STATE Wait until one model $m^{*}_3 \in S_t$ finishes execution
    		\STATE Update $TimeCost$
    		\STATE $B_{mem} \leftarrow B_{mem} + m^{*}_3.mem$
    		\STATE $S_t \leftarrow S_t \backslash \{m^{*}_3\} $
		\ENDWHILE
		\RETURN $S$
	\end{algorithmic}
\end{algorithm}

The optimization problem in the multi-processor setting is more challenging than that in the single-processor case.
The most related problem is the two-dimension orthogonal knapsack problem: given a set of squares, each of which has a profit, and the objective is to pack a subset of squares into a fixed-size square to maximize the total profit.
It is a NP-hard problem and several polynomial time approximation schemes (PTAS) have been proposed for this problem~\cite{han2008two}.
Due to the uncertainty and dynamics of model value prediction, however, those algorithms with performance guarantee are not feasible to solve our problem.
In this work, we design an efficient heuristic algorithm.
In each iteration, the algorithm first greedily selects one model that provides the highest value per unit resource area (the area here is the product of normalized time cost and memory cost) and sets the end time of this model as a temporary deadline.
Subject to the temporary deadline, the algorithm repeatedly selects the model with the highest value-memory ratio until the memory budget is reached.
See Algorithm~\ref{alg:ddl-mem} for details.
The intermediate model value is predicted by the pre-trained DRL agent.
Once a model completes its execution, its occupied memory will be released and a new iteration will begin.
We conduct extensive evaluations in Section~\ref{subsec:exp7} to show the effectiveness of our algorithm in practice.

\subsection{Performance Analysis}
\label{subsec:algo}
Due to the unknown real model value and the ever-changing predicted model value, existing algorithms for both the submodular function optimization with knapsack constraint and the two-dimension orthogonal knapsack constraints cannot be applied to solve our problem.
To measure the performance of our heuristic algorithms, we need to compare them with the optimal solution.
Under certain deadline and deadline-memory constraints, the NP-hard problem requires to enumerate $O(|M|!)$ policies to find the optimal one.
In our implementation, it is infeasible to enumerate all practicable execution policies of 30 models.
Therefore, we relax the problem to that even though the remaining resources are not enough for one model to complete its execution, the model can still be selected into the set $S$ and contribute the corresponding proportion of its value to $f(S,d)$.
We refer to the optimal policy of this relaxed problem as the \textit{optimal*} policy, which greedily selects the model with maximal $\frac{f(S \cup \{m\}, d) - f(S,d)}{m.time}$ within the deadline constraint or the model with maximal $\frac{f(S \cup \{m\}, d) - f(S,d)}{m.time * m.mem}$ within the deadline-memory constraint. 
The \textit{optimal*} policy provides a performance upper bound for the exact optimal policy of our original problem.
Experiment results in Section~\ref{sec:exp} show that the performance ratio between our heuristic algorithms and the ``optimal* policy'' exceeds 0.7 in most cases, which means that the true performance ratio of our algorithm is better than 0.7 in most cases. 
More evaluations of our algorithms will be presented in Section~\ref{sec:exp}.

\section{Evaluation}
\label{sec:exp}
We implemented the proposed adaptive model scheduling framework and conducted extensive evaluations for large-scale  comprehensive image labeling tasks. 
Our framework can also be adapted for other types of data labeling tasks.
We will open source our code for use or modification.

\subsection{Experimental Setup}
\label{subsec:exp1}

\textbf{Deep learning models.} 
We consider 10 diverse visual analysis tasks and deployed a total of 30 popular deep learning models for these 10 tasks by directly utilizing the off-the-shelf pretrained weights or training models on public datasets.
These models can label images with a wide range of semantic information (1104 different labels in all).
Table~\ref{tab:alltask} summaries the deployed models and their supported labels.
For each model, the time cost ($m.time$) is set as the average value while the GPU memory cost ($m.mem$) is set as the peak value.

\begin{table}[ht]
	{\small{
		\begin{center}
			\begin{tabular}{p{5.5cm}|p{2cm}}
				\hline
				\textbf{Task Name} &\textbf{Label\#}\\
				\hline
				Object Detection~\cite{yolov3} & 80  \\
				Place Classification~\cite{zhou2017places} & 365 \\
				Face Detection~\cite{amos2016openface} & 1\\
				Face Landmark Localization~\cite{cao2018openpose}& 70\\
				Pose Estimation~\cite{xiu2018poseflow}& 17\\
				Emotion Classification~\cite{pylearn2_arxiv_2013} & 7\\
				Gender Classification~\cite{simonyan2014very} &  2\\
				Action Classification~\cite{carreira2017quo} & 400\\
				Hand Landmark Localization~\cite{simon2017hand}& 42\\
				Dog Classification~\cite{KhoslaYaoJayadevaprakashFeiFei_FGVC2011}& 120\\
				\hline
				\textbf{10 Tasks} & \textbf{1104 Labels}\\
				\hline
			\end{tabular}
			\vspace{0in}
			\caption{Summary of 10 visual analysis tasks.}
			\label{tab:alltask}
		\end{center}
        }
    }
    \vspace{-0.2in}
\end{table}

\textbf{Datasets and ground truth.} 
We conducted experiments on five public image datasets: 
1) Stanford40~\cite{yao2011human}, 
2) PASCAL VOC 2012~\cite{everingham2010pascal}, 
3) MSCOCO 2017~\cite{lin2014microsoft}, 
4) MirFlickr25~\cite{huiskes2008mir} and 
5) Places365~\cite{zhou2017places}.
To train the DRL agents and measure the effectiveness of our model scheduling system, it is necessary to obtain the ground truth of each model's performance.
We executed all 30 models on 5 datasets and stored the output labels and confidences.
For each dataset, we split it into a training set and a testing set with the ratio of 1:4.

\textbf{Hardware setting.}
For all evaluations, we employed a server with 48 Intel Xeon CPU E5-2650 v4 cores and one Tesla P100 GPU.

\begin{figure*}[t!]
\centering
	\begin{minipage}{.25\linewidth}
		\centering
		\includegraphics[width=\linewidth]{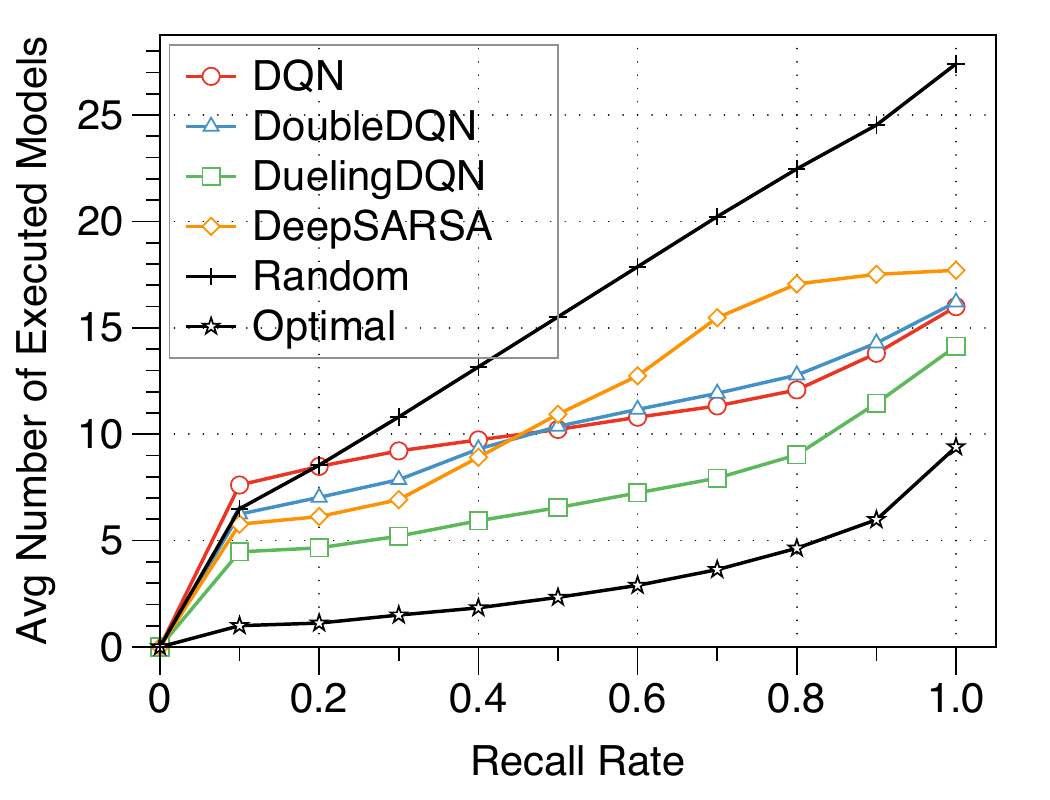}
		\textbf{\small (a) MSCOCO 2017}
	\end{minipage}
	\begin{minipage}{.25\linewidth}
		\centering
		\includegraphics[width=\linewidth]{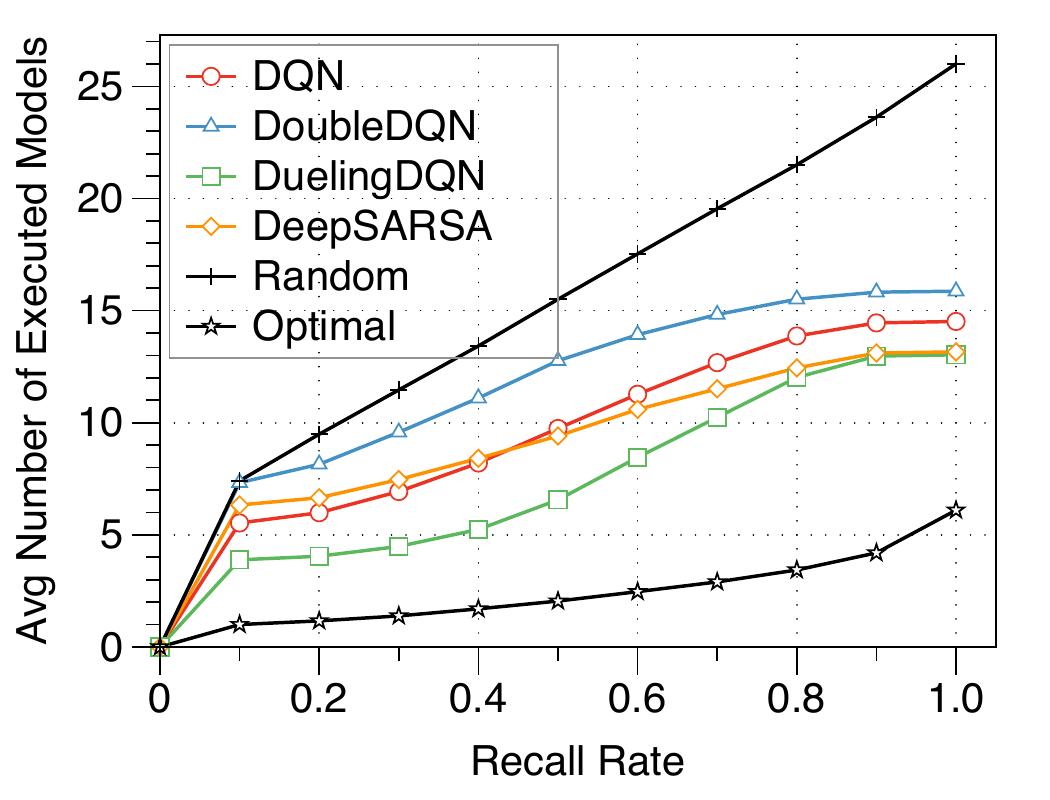}
		\textbf{\small (b) MirFlickr25}
	\end{minipage}
	\begin{minipage}{.25\linewidth}
		\centering
		\includegraphics[width=\linewidth]{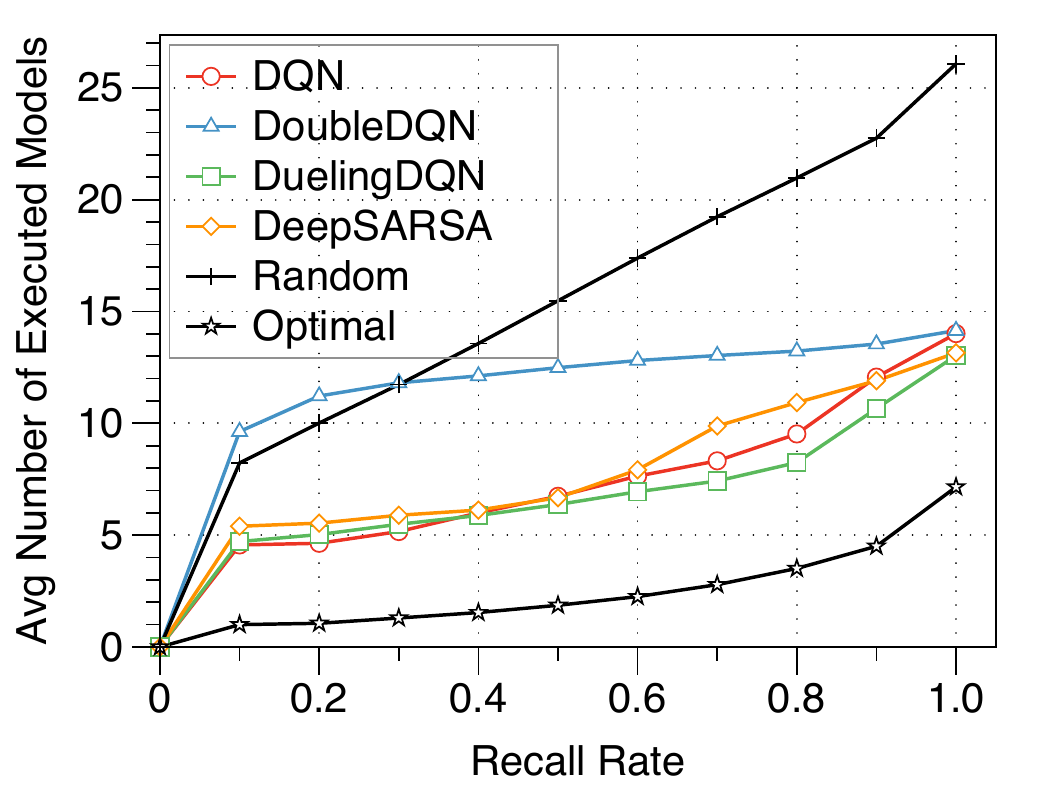}
		\textbf{\small (c) Places365}
	\end{minipage}
	\caption{The average number of executed models per image vs. required recall rate of output value.}
	\label{fig:rl1}
\end{figure*}

\begin{figure*}[t!]
\centering
	\begin{minipage}{.25\linewidth}
		\centering
		\includegraphics[width=\linewidth]{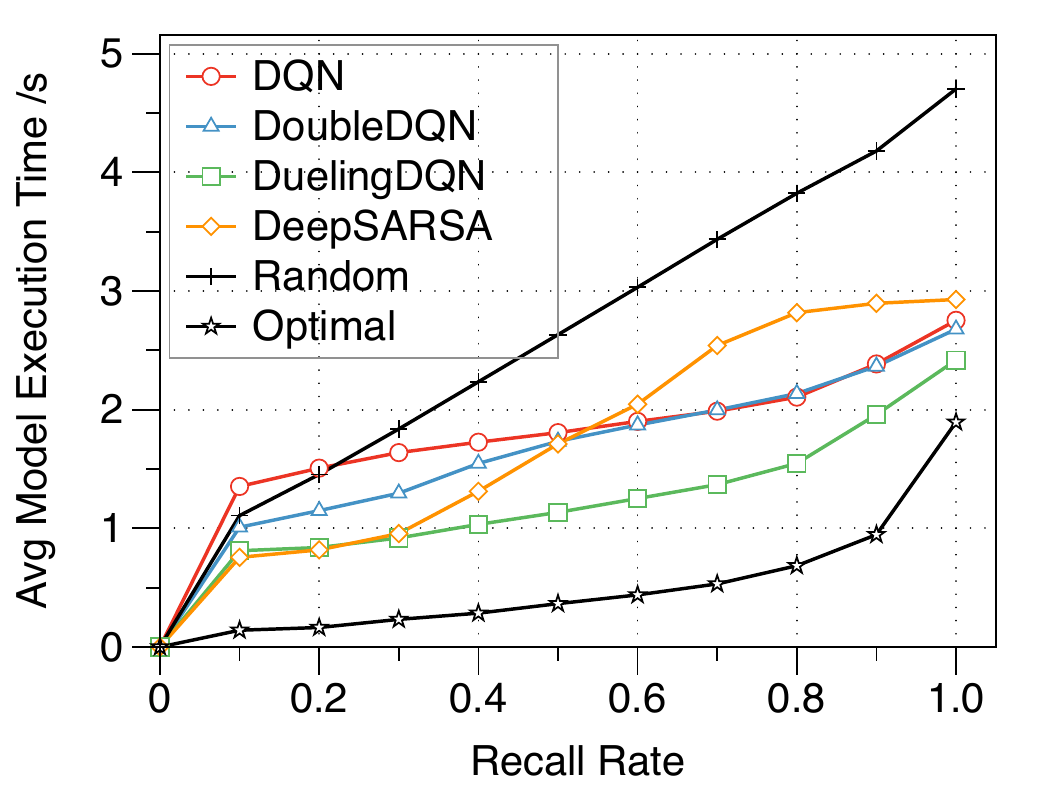}
		\textbf{\small (a) MSCOCO 2017}
	\end{minipage}
	\begin{minipage}{.25\linewidth}
		\centering
		\includegraphics[width=\linewidth]{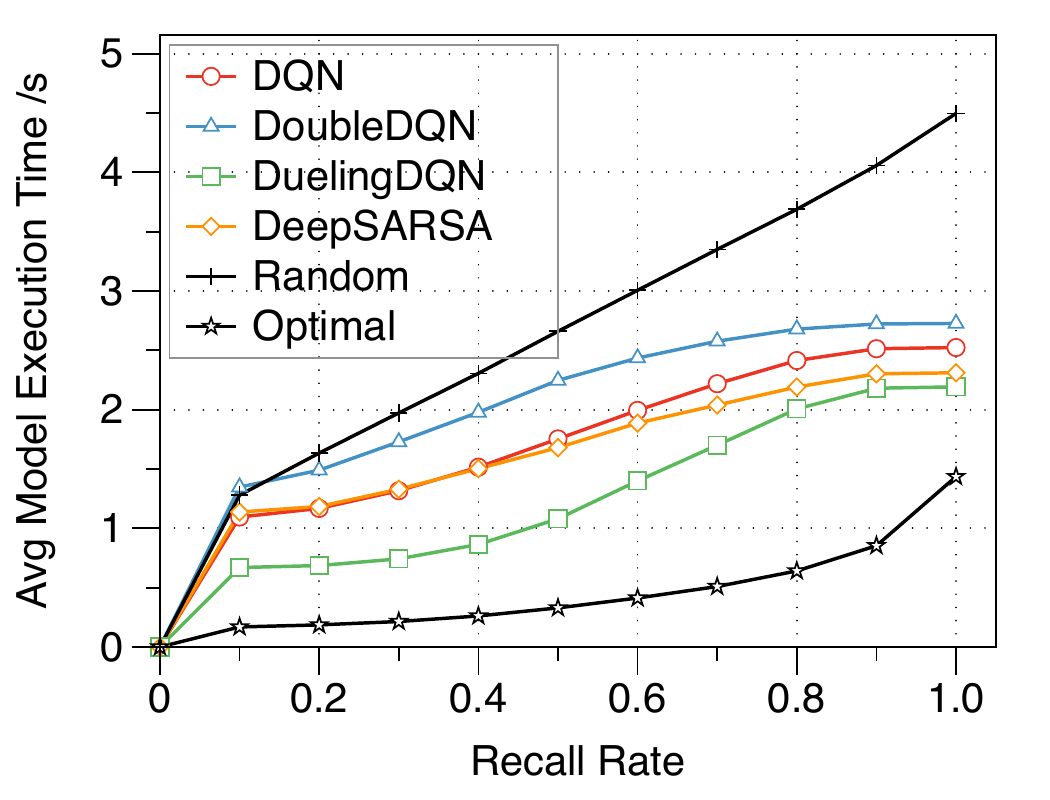}
		\textbf{\small (b) MirFlickr25}
	\end{minipage}
	\begin{minipage}{.25\linewidth}
		\centering
		\includegraphics[width=\linewidth]{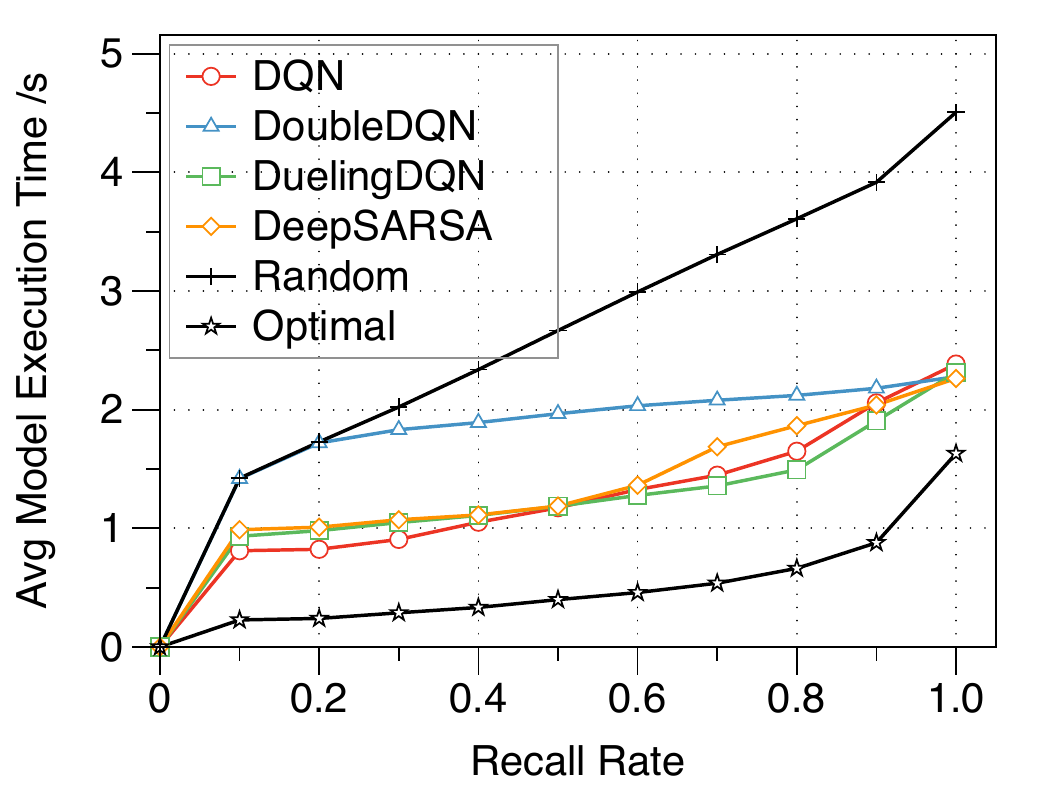}
		\textbf{\small (c) Places365}
	\end{minipage}
	\caption{The average time cost of executed models per image vs. required recall rate of output value.}
 	\vspace{-0.1in}
	\label{fig:rl2}
\end{figure*}

\subsection{RL-Based Model Value Prediction}
\label{subsec:exp2}
We implemented our designed DRL agent to predict model value before the execution.
We trained the DRL agent using four methods with the identical Q-value network as introduced in Section~\ref{subsec:agent}, including DQN~\cite{mnih2013playing}, DoubleDQN~\cite{van2016deep}, DuelingDQN~\cite{wang2015dueling} and DeepSARSA~\cite{corazza2015q}.
We trained and tested these four DRL agents on three datasets, MSCOCO 2017, MirFlickr25 and Places365, separately.
We cannot directly compare the model value predicted by DRL agents with the ground truth. 
The reason is that the predicted Q value of a model changes with the labeling state in each iteration, which is dependent on the set of currently executed models. 
For example, when the labels of a model have been output by other executed models, the predicted Q value of the model is -1 rather than the value of its output.        
To evaluate the performance of these agents, we conducted the following experiments.
We use the Q-value greedy policy that greedily selects the model with maximal Q value as the next one to be executed until the recall rate of true output value reaches a given threshold \footnote{The stop condition is determined by the ground truth}. 
Without considering any delay or resource constraint, the model selection is only dependent on the predicted model value.
The more accurate the predicted model value, the fewer the average number of executed models per image and the shorter the total execution time. 
Through these experiments, we compare the performance of DRL agents trained by four schemas.
To quantify the effectiveness of our DRL-based model value prediction method, we also implemented the following random policy and optimal policy as a comparison.
\begin{itemize}
    \item \textbf{Random policy}: randomly selects the next model, until the recall rate of true output value exceeds the threshold;
	\item \textbf{Optimal policy}: selects models in the descending order of their true output value, until the recall rate of true output value exceeds the threshold.
\end{itemize}

Fig.~\ref{fig:rl1} and Fig.~\ref{fig:rl2} show the experimental results of different policies on three diverse datasets, using the average executed number of models and average execution time as the metric respectively. 
The optimal policy consumes the smallest number of model executions to reach a given recall rate of output value since it knows the true value of each model.
There is a significant gap between the random policy and the optimal policy.
Compared with the random policy, the optimal policy saves 79.3-84.0\% (or 65.6-76.5\%) the number of model executions when the recall rate is 0.8 (or 1.0).
By predicting the model value, all four DRL agents outperform the random policy and effectively improve the model selection.
Among the four DRL agents, the one trained by DuelingDQN performs best, which saves 44.1-60.6\% (or 48.4-50.0\%) the number of model executions when the required recall rate is 0.8 (or 1.0), compared with the random policy.
Moreover, Fig.~\ref{fig:rl2} shows that, compared with the random policy,  the DRL agent saves 45.6-59.5\% (or 48.6-51.2\%) execution time with 0.8 (or 1.0) recall rate.
Besides, we notice that in both Fig.~\ref{fig:rl1} and Fig.~\ref{fig:rl2} the growth trends of model execution cost for DRL-based policies are similar to that of the random policy when the recall rate is under 10\%. 
As the recall rate gets larger, the growth trends for the DRL-based policies are more and more similar to that of the optimal solution. 
It meets the expectation of our design: the DRL agent initially selects models in an almost random way due to the lack of information about the data; 
after obtaining some execution results, the agent can predict the model value accurately and select the next model with the largest value as the optimal policy does.  


\vspace{-0.1in}
\subsection{Agent Knowledge vs. Handcrafted Rules}
\label{subsec:exp3}

\begin{figure}[t]
    \centering
    \begin{minipage}{.5\linewidth}
	    \includegraphics[width=\linewidth]{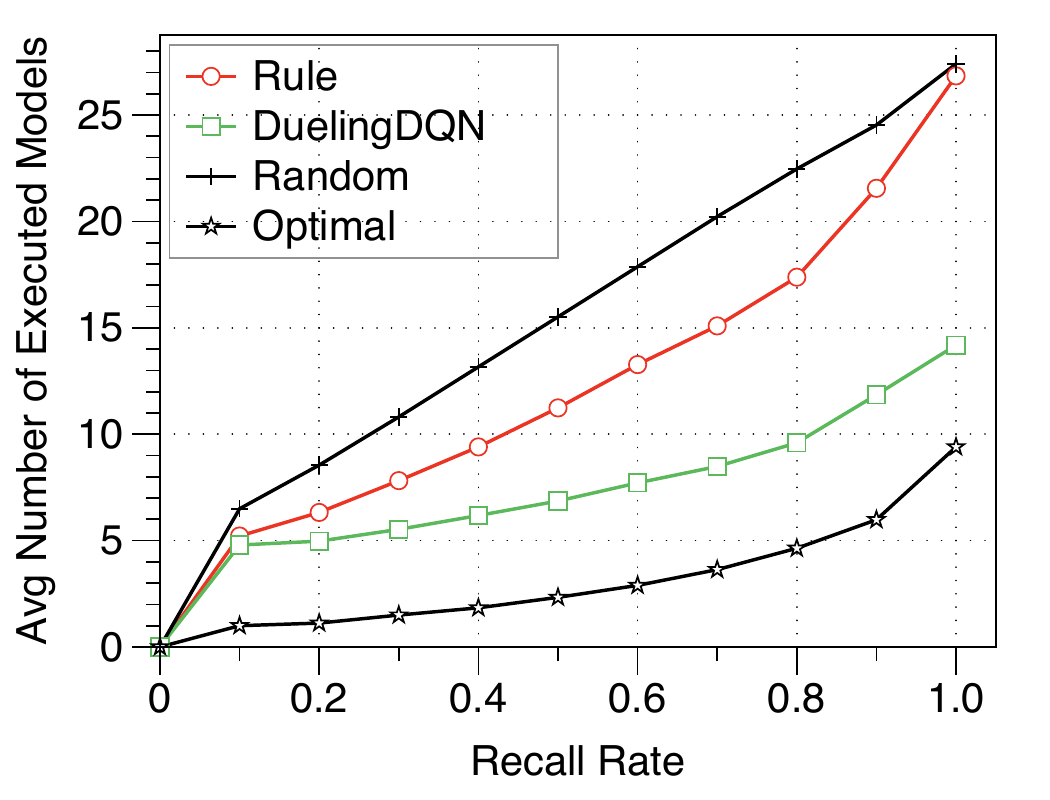}
    \end{minipage}\hfill
    \begin{minipage}{.5\linewidth}
	    \includegraphics[width=\linewidth]{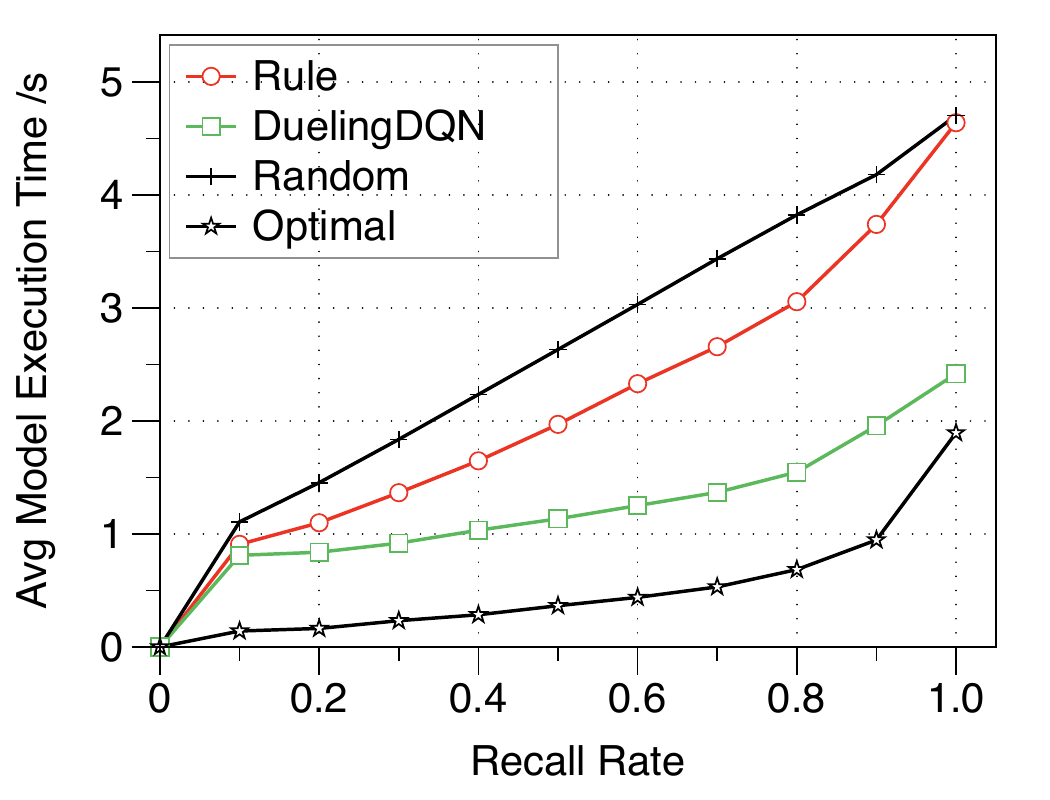}
    \end{minipage}
    \vspace{-0.1in}
	\caption{(Left) Average number of executed models and (Right) average model execution time, vs. required recall rate of output value.}
	\vspace{-0.1in}
	\label{fig:rule}
\end{figure}

A direct way to optimize model scheduling is to manually design execution rules based on common sense (see Section \ref{subsec:overview}).
To explore the effectiveness of handcrafted rules, we asked three volunteers with certain deep learning related knowledge to provide execution rules they can think of.
We selected 10 rules (listed in Table~\ref{tab:rules}) accepted by all of them and implemented a rule-based model scheduling policy.

\begin{table}[h]
	\centering
	{\tiny {
		\begin{tabular}{p{2.1cm}|p{1.6cm}|p{3.8cm}}
			\hline
			\textbf{Current Model Task}& \textbf{Output Label} & \textbf{Rule}\\
			\hline
			Object Detection & person & 2$\times$$\mathcal{P}$(Pose Estimation)\\
			Object Detection & person & 2$\times$$\mathcal{P}$(Gender Classification)\\
			Object Detection & dog & 2$\times$$\mathcal{P}$(Dog Classification)\\
			Face Detection & face & 2$\times$$\mathcal{P}$(Face Landmark Localization)\\
			Face Detection & face & 2$\times$$\mathcal{P}$(Emotion Classification)\\
			Pose Estimation & body keypoints & 2$\times$$\mathcal{P}$(Action Classification)\\
			Pose Estimation & wrist keypoints & 2$\times$$\mathcal{P}$(Hand Landmark Localization)\\
			Place Classification & indoor places & 0.5$\times$$\mathcal{P}$(Animal-Object Detection)\\
			Place Classification & indoor places & 0.5$\times$$\mathcal{P}$(Sport-Action Classification)\\
			\hline
		\end{tabular}
		\caption{Ten handcrafted model execution rules.}
		\label{tab:rules}
		}}
	\vspace{-0.12in}
\end{table}

Let $\mathcal{P}$(Task) denote the probability of executing models for certain ``Task''.
Initially, all models have equal execution probabilities.
After obtaining certain labels, the policy updates the execution probabilities of models according to the rules.
Fig.~\ref{fig:rule} shows the experimental results on MSCOCO 2017. 
DuelingDQN agent significantly outperforms the rule-based policy, which only saves 22.6\% (or 2.1\%) executed models and 20.1\% (or 1.4\%) running time with 0.8 (or 1.0) recall rate. Results on other datasets are similar.  

To understand the difference between the handcraft rule and the semantic relationship learned by the DRL agent,
 we visualize model execution sequences that are scheduled by the Q-value greedy policy of DuelingDQN agent on different sample images.
Fig.~\ref{fig:seq} shows an example of model execution sequence for an image (im6696.jpg) in MirFlickr25 dataset.
The knowledge of DRL agent meets the common sense that ``there should be some \textit{cups} in the \textit{pub} and people may \textit{drink beer} here''.
We can infer that the implicit semantic relationship among labeling capacities of deep learning models are highly complicated.
Even though the designers of the rules in Table~\ref{tab:rules} have some knowledge of deep learning models, the handcrafted rules only consider the pair-wise relationship and the influence on execution probabilities are fixed (2 or 0.5 times probability).
The more rules we designed, the harder to adjust the influence of each rule.
So we hold that designing a set of effective execution rules is infeasible for data labeling tasks at scale.


\begin{figure}[t]
    \centering
	\includegraphics[width=0.9\linewidth]{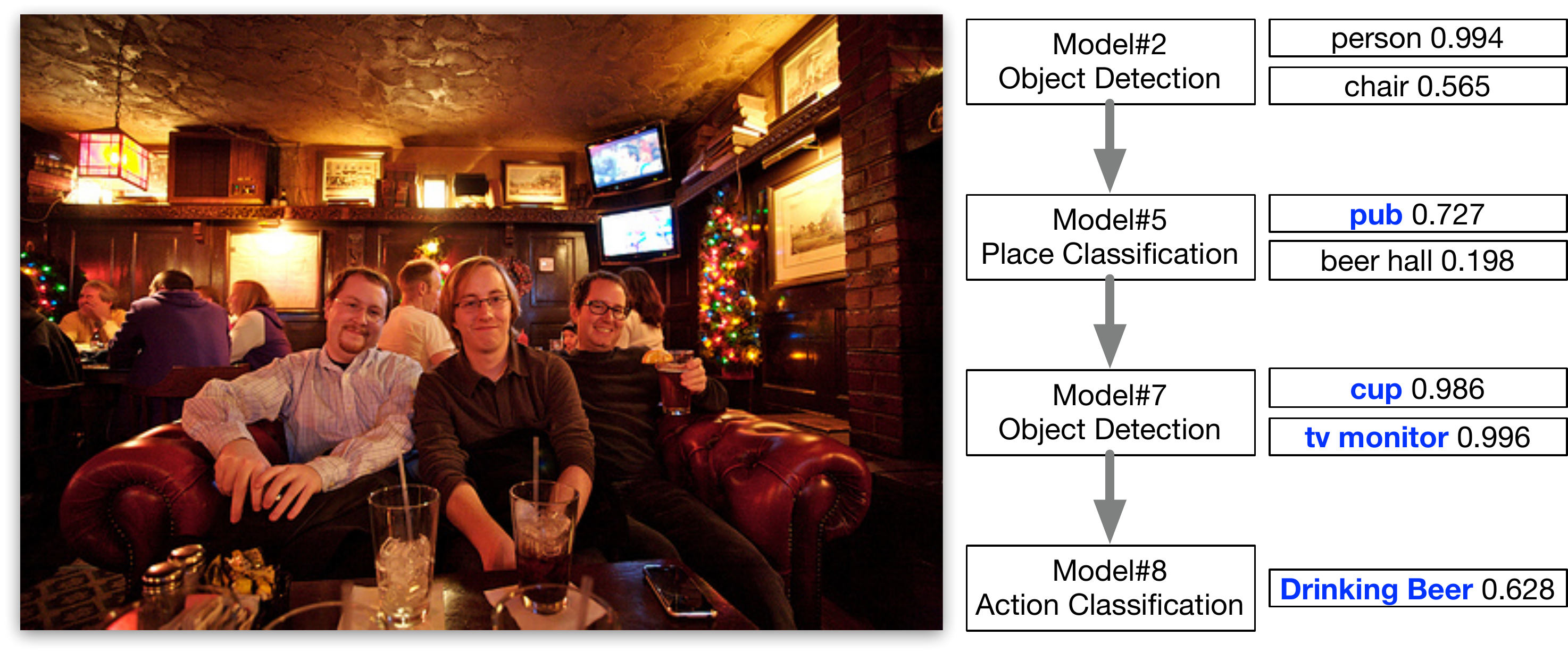}
	\caption{Model execution sequence of an image in MirFlickr25, scheduled by DuelingDQN agent with Q-value greedy policy.}
	\label{fig:seq}
	\vspace{-0.18in}
\end{figure}

\subsection{Knowledge Transferability}
\label{subsec:exp4}

\begin{figure*}[t]
	\begin{minipage}{.25\linewidth}
		\centering
		\includegraphics[width=\linewidth]{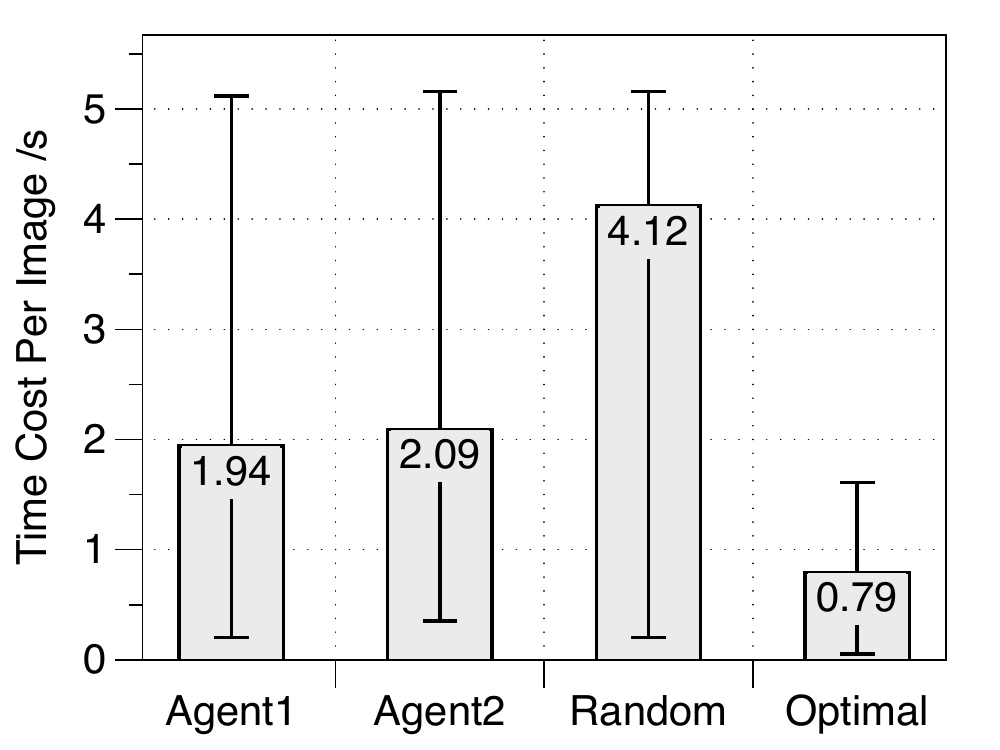}
		\textbf{\small (a) Average Time Cost \\ $Dataset_1$}
	\end{minipage}\hfill
	\begin{minipage}{.25\linewidth}
		\centering
		\includegraphics[width=\linewidth]{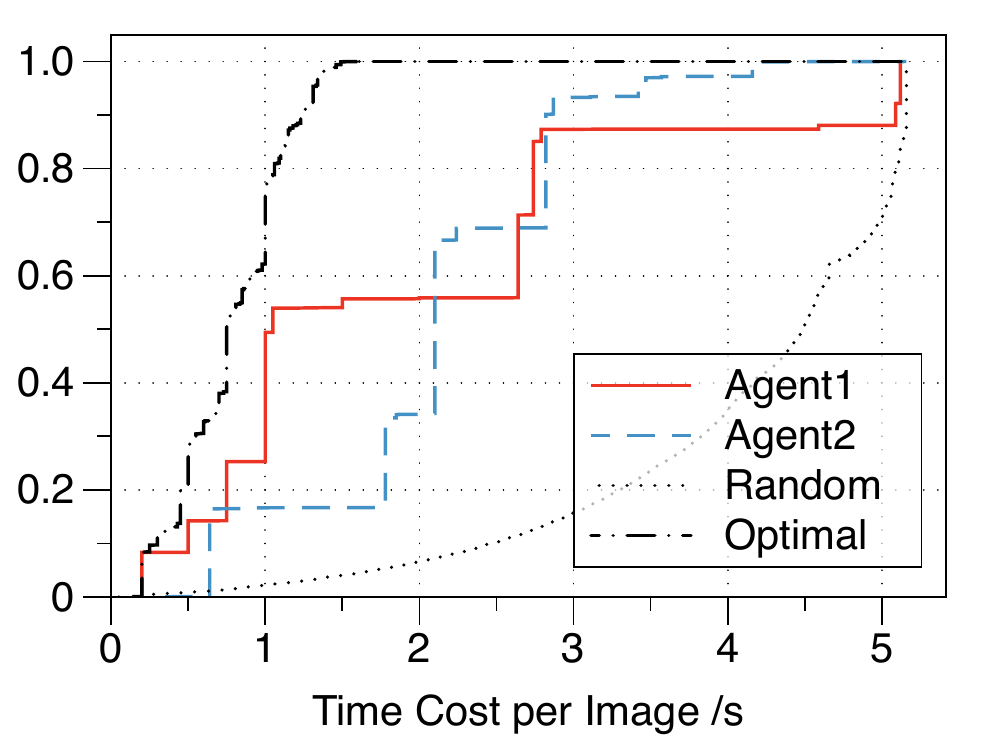}
		\textbf{\small (b) Time Cost CDF  \\ $Dataset_1$}
	\end{minipage}\hfill
	\begin{minipage}{.25\linewidth}
		\centering
		\includegraphics[width=\linewidth]{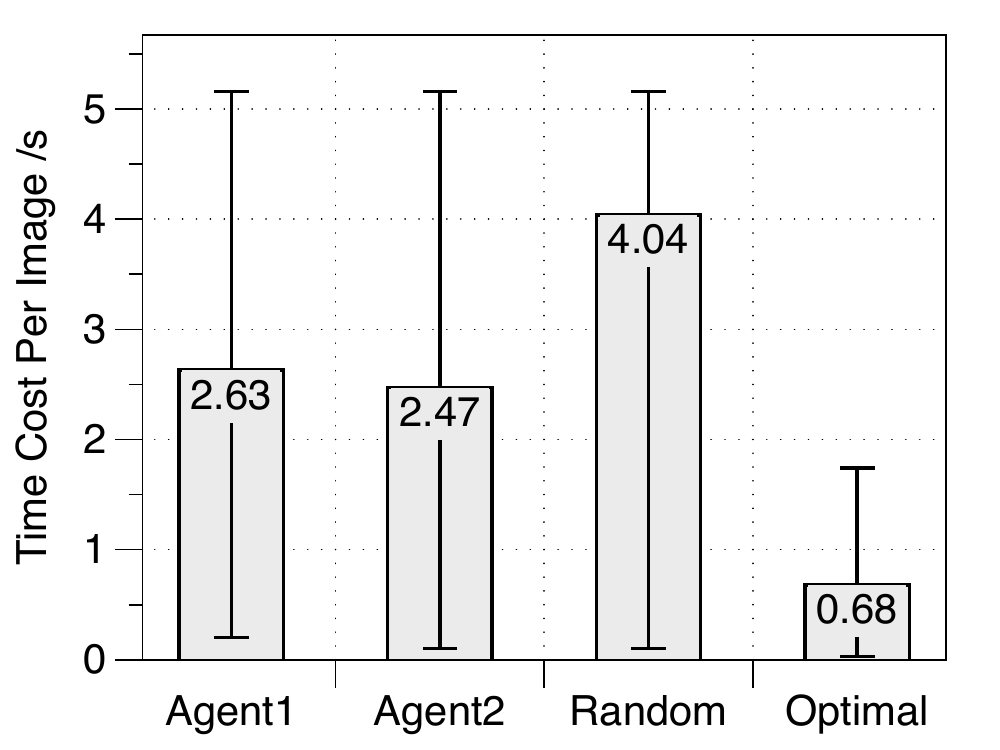}
		\textbf{\small (c) Average Time Cost \\$Dataset_2$}
	\end{minipage}\hfill
	\begin{minipage}{.25\linewidth}
		\centering
		\includegraphics[width=\linewidth]{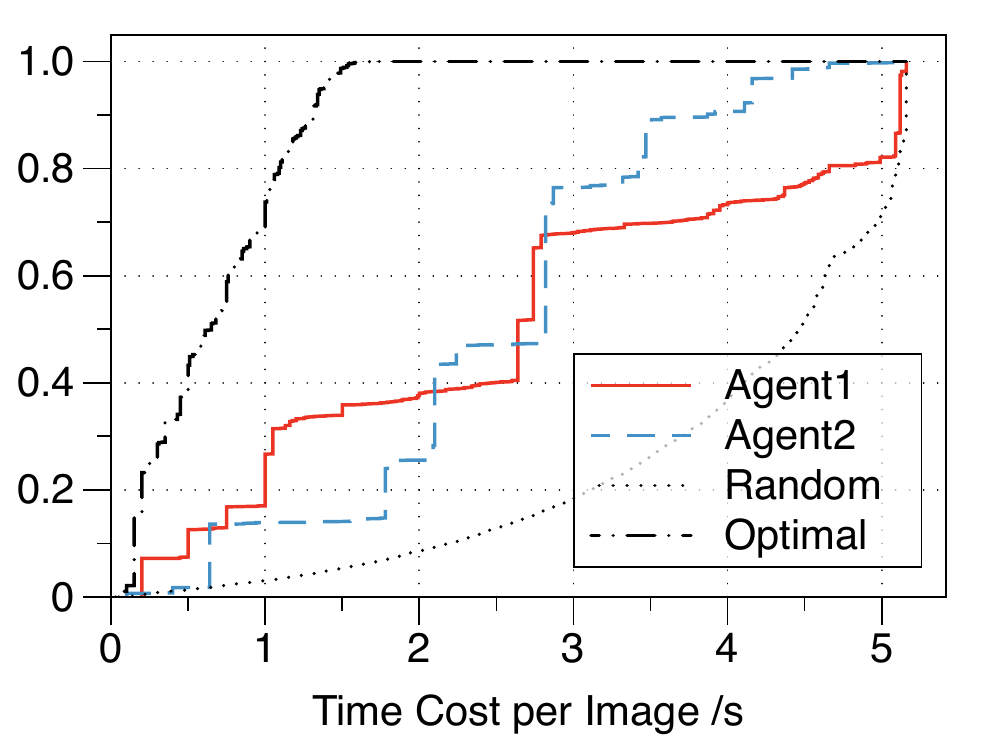}
		\textbf{\small (d) Time Cost CDF \\ $Dataset_2$}
	\end{minipage}
	\caption{Model execution time cost by different polices, tested on $Dataset_1$ and $Dataset_2$.}
	\vspace{-0.12in}
	\label{fig:transfer}
\end{figure*}

The experiments above show the effectiveness of DRL agents that were trained and tested on subsets of the same dataset.
An interesting question is: how \textit{transferable} the learned knowledge is?
Stanford40 and PASCAL VOC 2012 are two datasets that have obvious variance in content distribution.
Stanford40\cite{yao2011human} is collected for the action recognition task which mainly consists of scenes related to human activities.
PASCAL VOC 2012 dataset\cite{everingham2010pascal} covers a larger range of visual objects, including animals, vehicles, household furniture, etc.
Since DuelingDQN outperforms other DRL schema (\S\ref{subsec:exp2}), we use the experimental results of DuelingDQN agents as the representative. 
Two DuelingDQN-based agents are trained on the training samples from Stanford40 and PASCAL VOC 2012 respectively and tested them on both testing sets of the two datasets.
In this experiment, we use the average model execution time cost when all output value is recalled to measure the effectiveness of the agent.
The scheduling policy of DRL agents is still the Q-value greedy policy.
Random policy and optimal policy are also utilized as comparisons.
For simplicity, we use the following notations:

{\small
\begin{itemize}
	\item $Dataset_1$: Stanford40 testing set;
	
	\item $Datasset_2$: PASCAL VOC 2012 testing set;
	
	\item $Agent_1$: DRL agent trained on Stanford40 training set;
	
	\item $Agent_2$: DRL agent trained on PASCAL VOC 2012 training set.
\end{itemize}
\vspace{-0.05in}
}

As shown in Fig.~\ref{fig:transfer}, executing all 30 models costs 5.16s per image and the random policy averagely takes 4.12s on $Dataset_1$ and 4.04s on $Dataset_2$.
Compared with the random policy, 
 $Agent_1$ costs 1.94s (or 2.63s) on $Dataset_1$ (or $Dataset_2$).
 $Agent_2$ costs 2.09s (or 2.47s) on $Dataset_1$ (or $Dataset_2$).
So on average, DRL agents save 51.1\% running time on $Dataset_1$  and 36.9\% running time on $Dataset_2$.
It is inspiring that for one dataset the knowledge learned from the other dataset with different content distribution can also optimize scheduling effectively. 
Cumulative distribution figures (CDF) in Fig.~\ref{fig:transfer} show more details about the time cost distributions.
We can infer that although the knowledge learned from the two datasets vary a lot, 
 both of them could be utilized to optimize the model scheduling on a wide range of image datasets efficiently.

\textbf{Limitations:}
Although the experimental results demonstrate the transferability of the learned semantic relationship among models by DRL agents, 
we want to highlight some limitations. 1) Assumption of the intersected content distribution.
Stanford40 and PASCAL VOC have quite different content distributions, but effective optimization is still dependent on the assumption.
We studied some extreme cases, for example, training agents only by dog-related images and testing them by human action-related images, which show worse model scheduling than the random policy and vice versa. 2) The requirement of relatively adequate training samples.
Similar to other learning-based models, an effective DRL agent has some requirements for the volume of training samples.
An empirical size of the training set is around 10-20\% of the whole dataset with a random sampling method, though there is no theoretical proof of the convergence.


\subsection{Model Priority}
\label{subsec:exp5}
As introduced in Section~\ref{sec:semantic}, we use the parameter $\theta$ to control model priorities.
It is worth mentioning that the experiments above were conducted with identical $\theta$ value (1.0) for every model.
To study the effect of $\theta$, we train DRL agents by setting the parameter of one ``face detection model'' to 1.0, 2.0, 5.0 and 10.0 to increase its priority, 
 while keeping $\theta$ of other models as 1.0. 
Intuitively, with a higher $\theta$ parameter, we expect the DRL agent to predict a higher Q value for the ``face detection'' model in the early stages.
So we analyze the order of the ``face detection model'' in the scheduling sequences.
\begin{figure}[h]
	\vspace{0em}
	\centering
	\begin{minipage}{.5\linewidth}
		\centering
		\includegraphics[width=\linewidth]{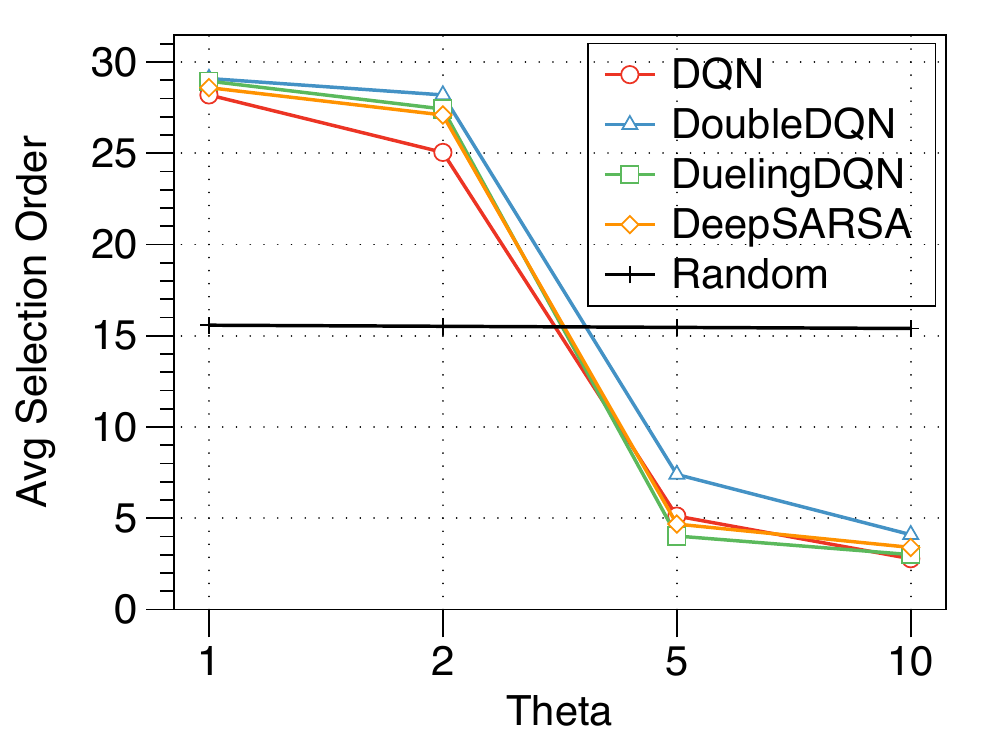}
		\textbf{\small (a) Average execution order}
	\end{minipage}\hfill
	\begin{minipage}{.5\linewidth}
		\centering
		\includegraphics[width=\linewidth]{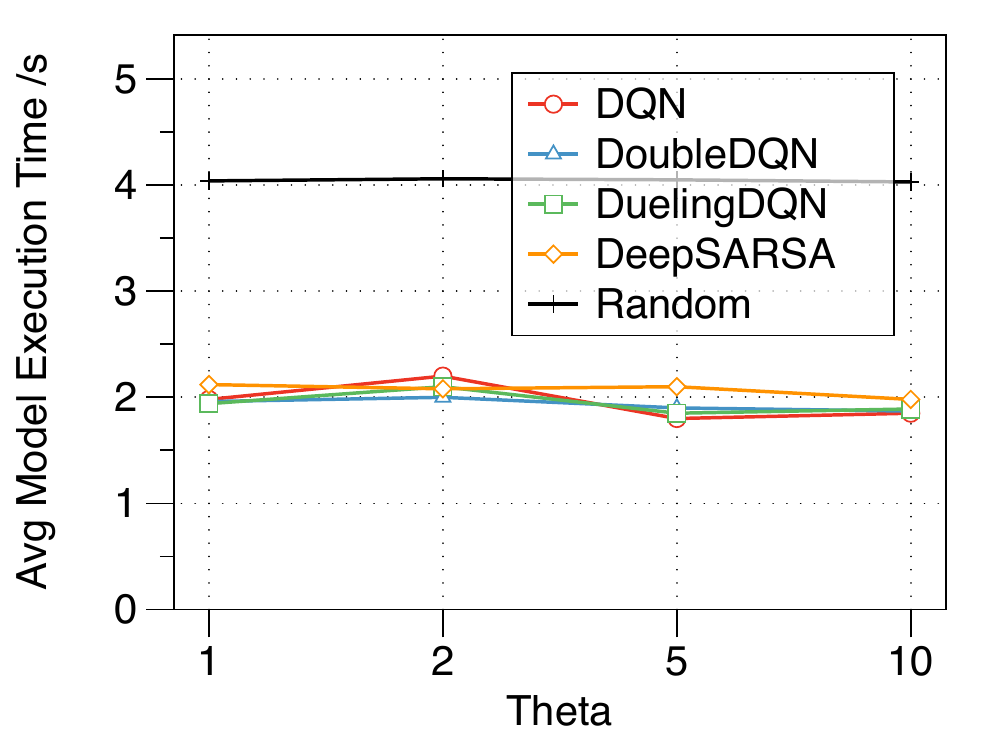}
		\textbf{\small (b) Average time cost}
	\end{minipage}
	\vspace{-0.12in}
	\caption{The effect of adjusting priority parameter $\theta$ of ``face detection'' model.}
	\label{fig:theta}
\end{figure}

Fig.~\ref{fig:theta}(a) shows that the increased $\theta$ effectively brings forward the execution of the ``face detection'' model.
It means that the user can obtain his/her preferred label (``face'' in this experiment) with a shorter delay.
As a representative, DuelingDQN agents schedule the ``face detection'' model at the 28.9 / 27.4 / 4.0 / 3.0 iterations on average, with 1.0 / 2.0 / 5.0 / 10.0 $\theta$ value.
Meanwhile, as shown in Fig.~\ref{fig:theta}(b), although $\theta$ shifts the order of model execution sequences, 
 the DRL agents still keep a good optimization performance on the total execution time (with 1.0 recall rate of label value).
On average, the DuelingDQN agents save 51.9 / 48.2 / 54.3 / 53.1\% running time without any loss of valuable labels, compared with the random policies.

\textbf{Practical utility.}
Being able to control the model execution priorities, while keeping the excellent scheduling performance, is quite valuable in data labeling applications.
For example, in a surveillance video monitoring system, we wish to obtain the output of the ``abnormal action detection'' model with higher priority (thus shorter delay) than the ``common object localization'' model, when the computing resources are limited.
However, simply using a fixed scheduling policy will cause a serious waste of resources, since abnormal actions occur much less frequently than common objects.
Then the ability of the proposed DRL agent is crucial to solving the challenge:
flexibly controlling the model priorities and keeping effective optimization of scheduling.


\subsection{Scheduling under Deadline Constraint}
\label{subsec:exp6}

\begin{figure*}[t]
    \centering
    \begin{minipage}{.25\linewidth}
        \centering
		\includegraphics[width=\linewidth]{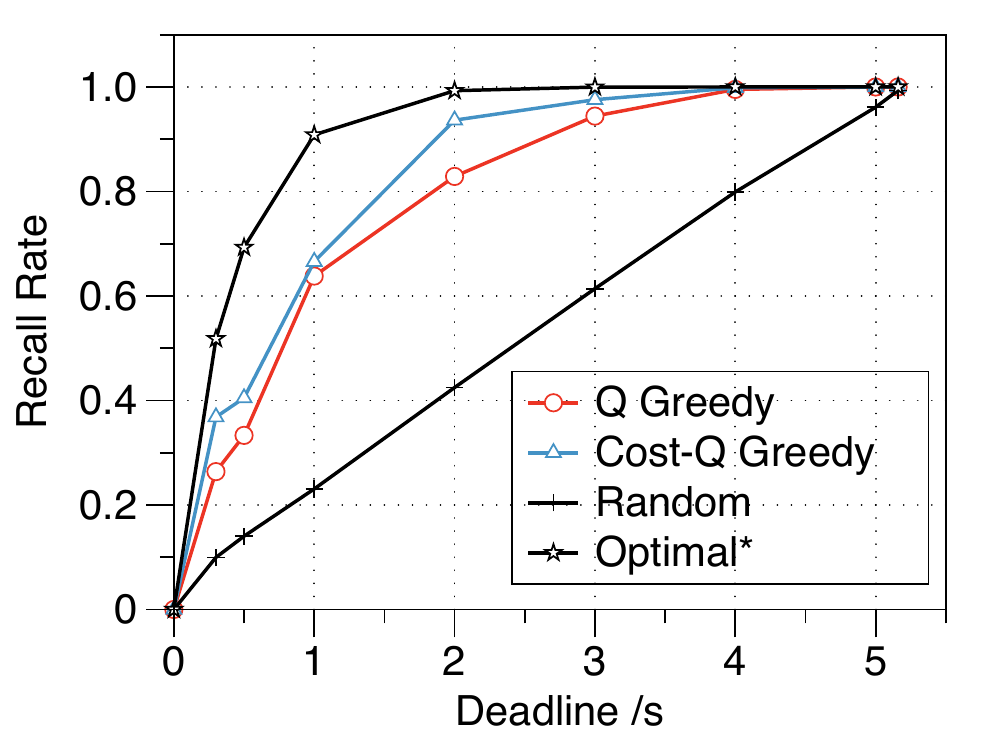}
		\textbf{\small (a) MSCOCO 2017}
	\end{minipage}\hfill
	\begin{minipage}{.25\linewidth}
		\centering
		\includegraphics[width=\linewidth]{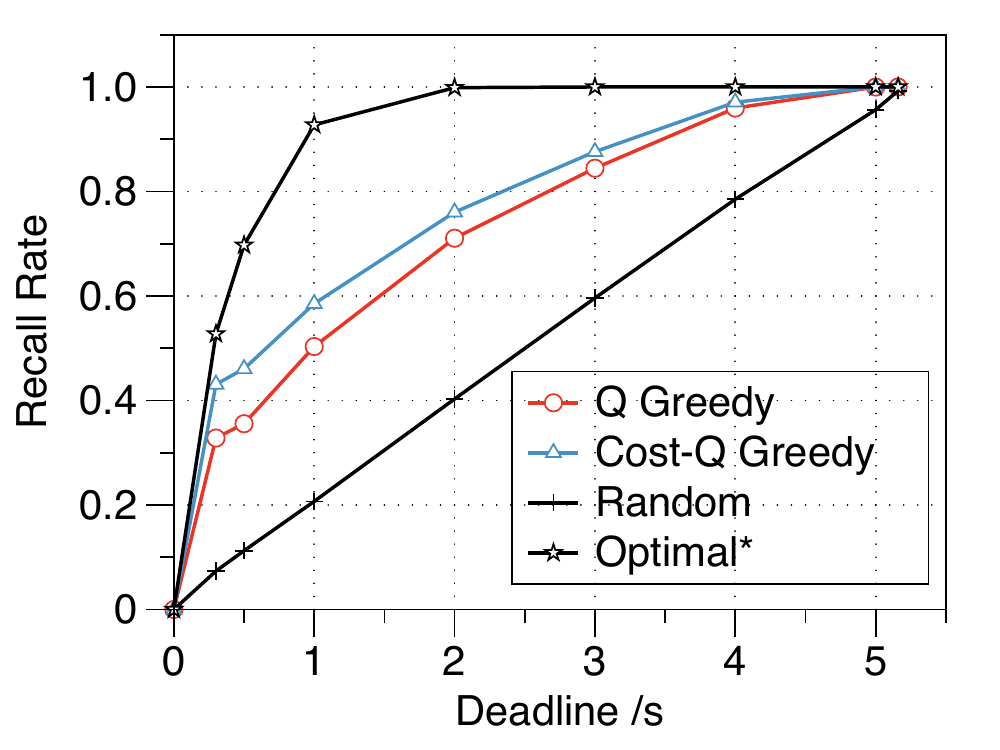}
		\textbf{\small (b) MirFlickr25}
	\end{minipage}\hfill
	\begin{minipage}{.25\linewidth}
		\centering
		\includegraphics[width=\linewidth]{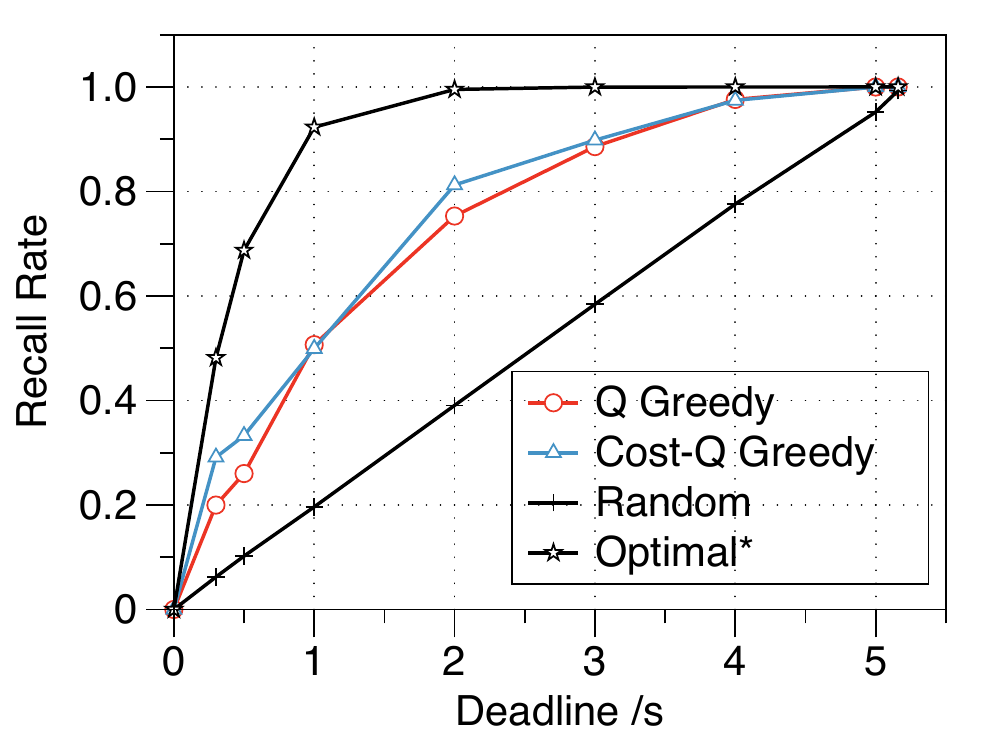}
		\textbf{\small (c) Places365}
	\end{minipage}\hfill
	\begin{minipage}{.25\linewidth}
	    \centering
    	\includegraphics[width=\linewidth]{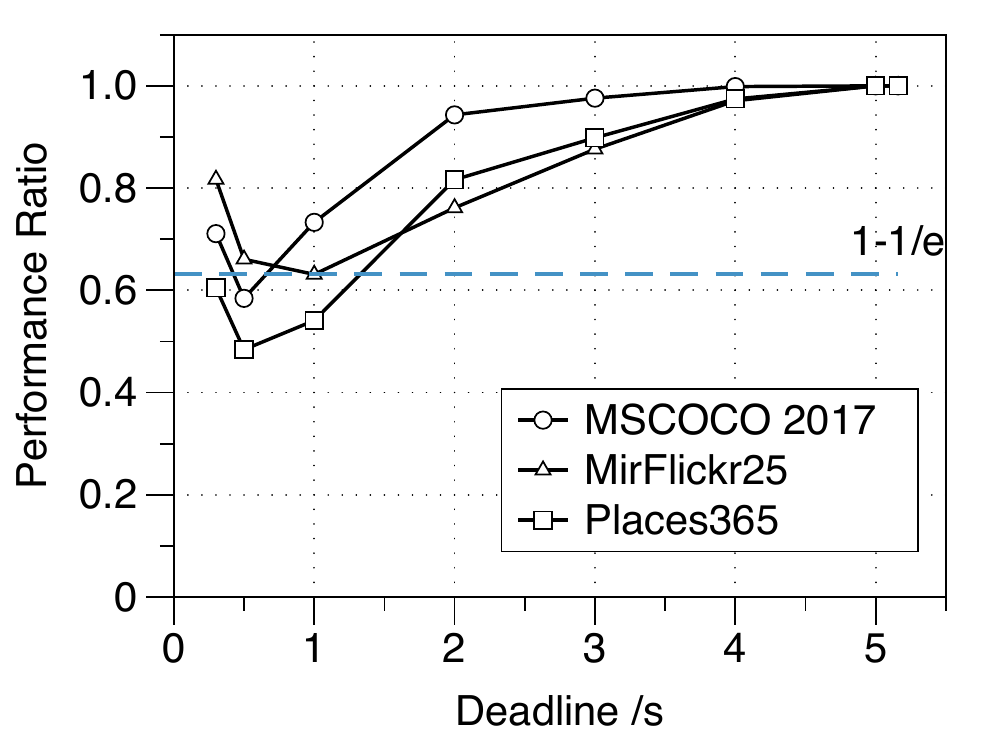}
    	\textbf{\small (d) Performance ratio}
	\end{minipage}
	\caption{Value recall rate under deadline constraints.}
	\label{fig:ddl}
	\vspace{-0.12in}
\end{figure*}

\begin{figure*}[t]
	\vspace{0em}
	\begin{minipage}{.25\linewidth}
		\centering
		\includegraphics[width=\linewidth]{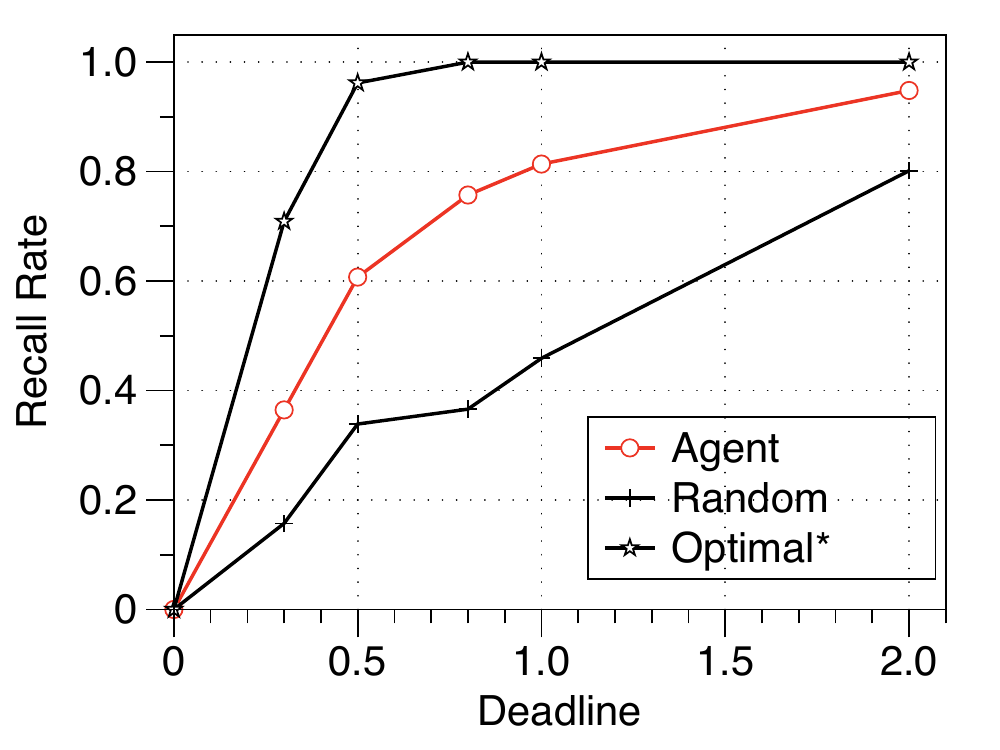}
		\textbf{\small (a) 8GB Memory }
	\end{minipage}\hfill
	\begin{minipage}{.25\linewidth}
		\centering
		\includegraphics[width=\linewidth]{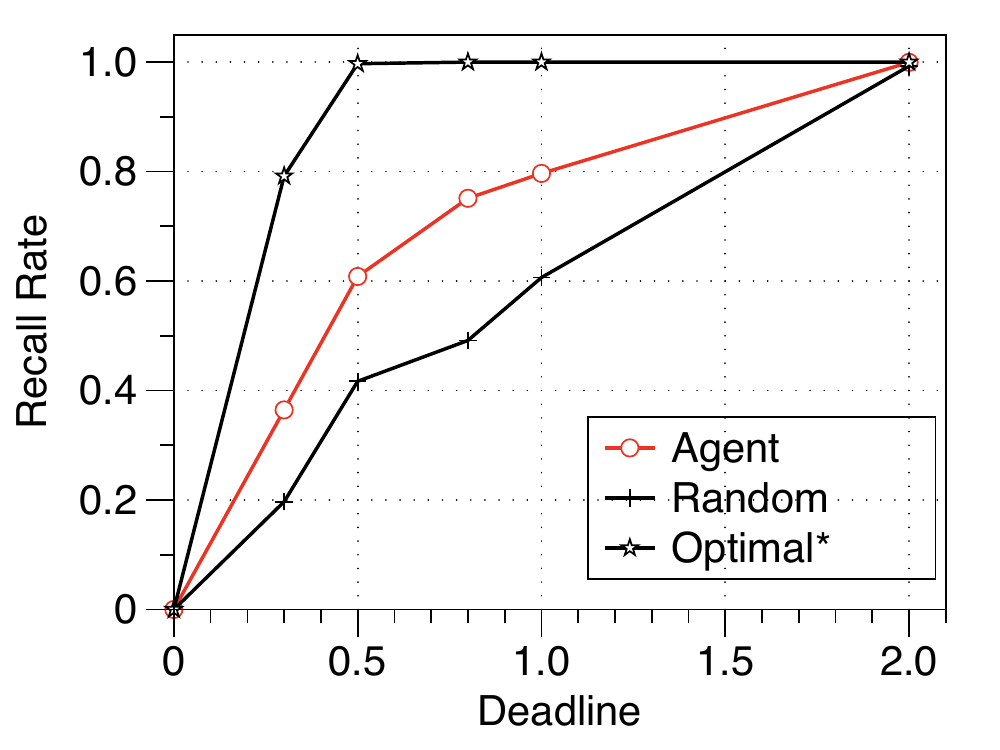}
		\textbf{\small (b) 12GB Memory}
	\end{minipage}\hfill
	\begin{minipage}{.25\linewidth}
		\centering
		\includegraphics[width=\linewidth]{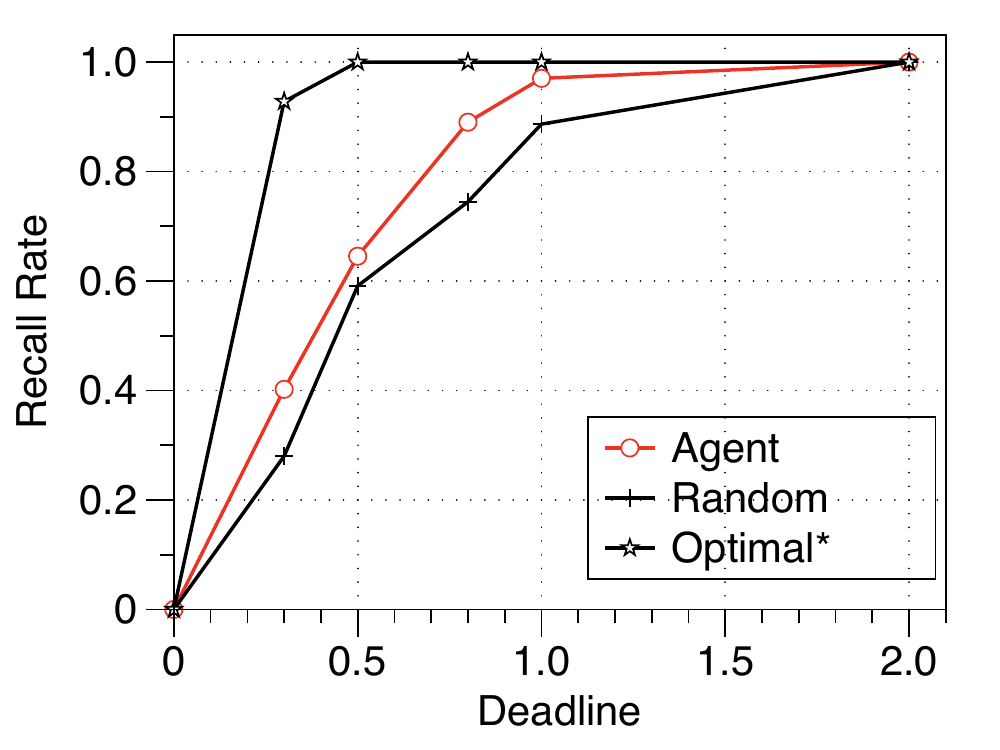}
		\textbf{\small (c) 16GB Memory}
	\end{minipage}\hfill
	\begin{minipage}{.25\linewidth}
		\centering
		\includegraphics[width=\linewidth]{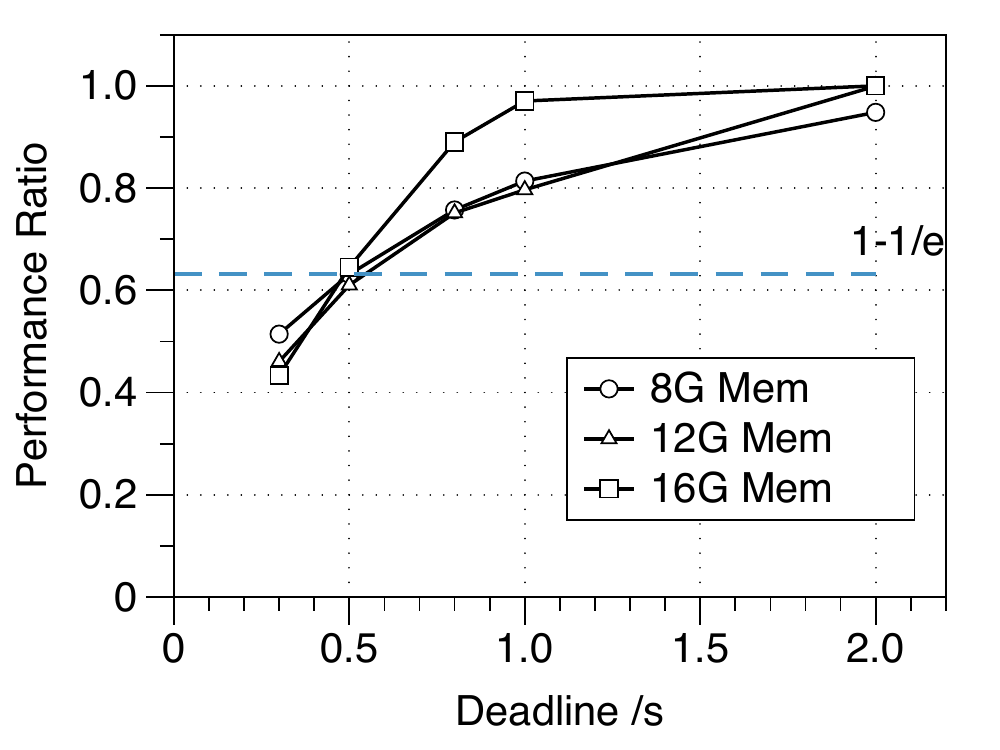}
		\textbf{\small (d) Performance ratio.}
	\end{minipage}
	\caption{Value recall rate under different memory and deadline constraints.}
	\vspace{-0.12in}
	\label{fig:ddl-mem}
\end{figure*}

Experiments above use the required output value recall rate as the condition to terminate the scheduling process.
More practical usage of the proposed framework is scheduling models under computing resources or delay constraints.
We consider the most common constraint, the deadline of each input data, of data labeling tasks, and evaluate the proposed scheduling Algorithm~\ref{alg:ddl} (referred to as \textit{Cost-Q Greedy policy}).
Using the output value recall rate under deadline constraint as the metric, three policies are implemented for comparison:

{\small
\begin{itemize}
\item     \textbf{Random policy}: randomly selects model until the deadline;
\item     \textbf{Optimal* policy}: greedily selects the model with maximal \\
$\frac{f(S\cup\{m\},d) - f(S,d)}{m.time}$ with relaxing constraint (\S\ref{subsec:algo});
\item     \textbf{Q-Greedy policy}: greedily selects the model with maximal Q value until the deadline (\S\ref{subsec:exp2}). 
\end{itemize}}

As a representative, all DRL agents in the following experiments are trained with DuelingDQN schema.
Fig.~\ref{fig:ddl}(a-c) show that Algorithm~\ref{alg:ddl} outperforms the Q-greedy policy on three datasets.
Algorithm~\ref{alg:ddl} boosts the value recall rate by 188.7-309.5\% with a 0.5s delay budget, compared with the random policies.
We plot the performance ratio of Algorithm~\ref{alg:ddl} to optimal* policy in Fig.~\ref{fig:ddl}(d), which shows that in most cases our cost-Q greedy algorithm performs better than the provable guarantee $1-1/e$ of classic approaches.

Moreover, we evaluate $Agent_1$ and $Agent_2$ on $Dataset_1$ and $Dataset_2$ (\S\ref{subsec:exp4}), using Algorithm~\ref{alg:ddl} instead of Q-greedy policy.
As shown in Fig.~\ref{fig:ddl2}, the DRL agent knowledge is transferable between dataset with different content distributions.
With 1.0s deadline constraint, $Agent_1$ and $Agent_2$ improve the recalled value by 346.8 (or 250.5\%) and 224.9 (or 190.5\%) on $Dataset_1$  (or $Dataset_2$).

\begin{figure}[t]
	\begin{minipage}{.5\linewidth}
		\centering
		\includegraphics[width=\linewidth]{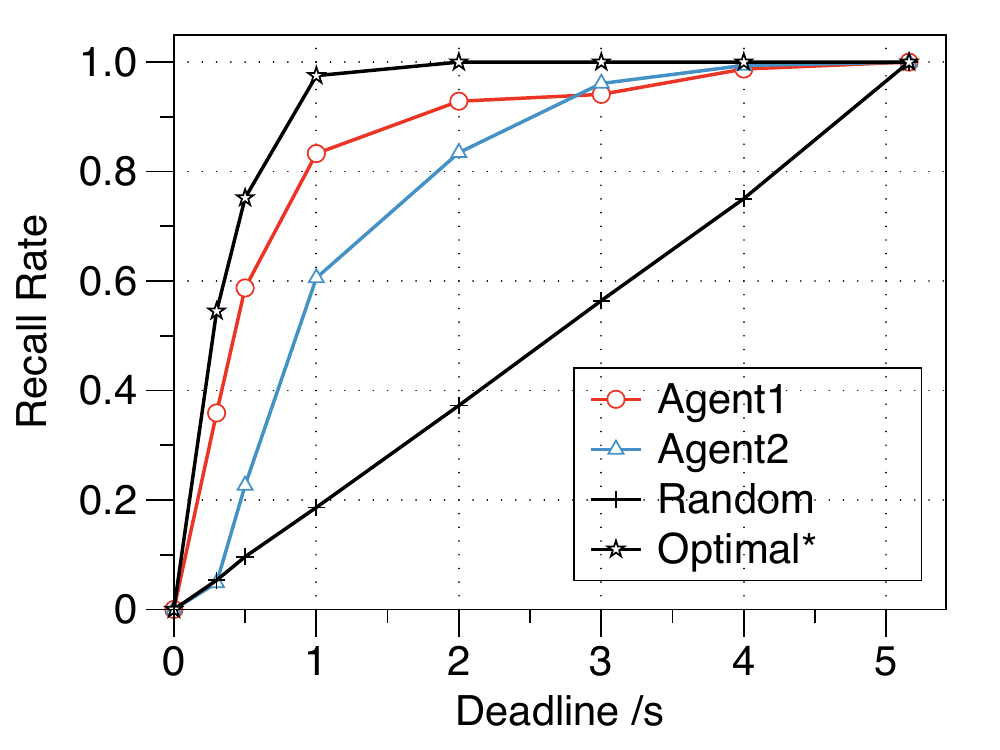}
		\textbf{\small (a) $Dataset1$}
	\end{minipage}\hfill
	\begin{minipage}{.5\linewidth}
		\centering
		\includegraphics[width=\linewidth]{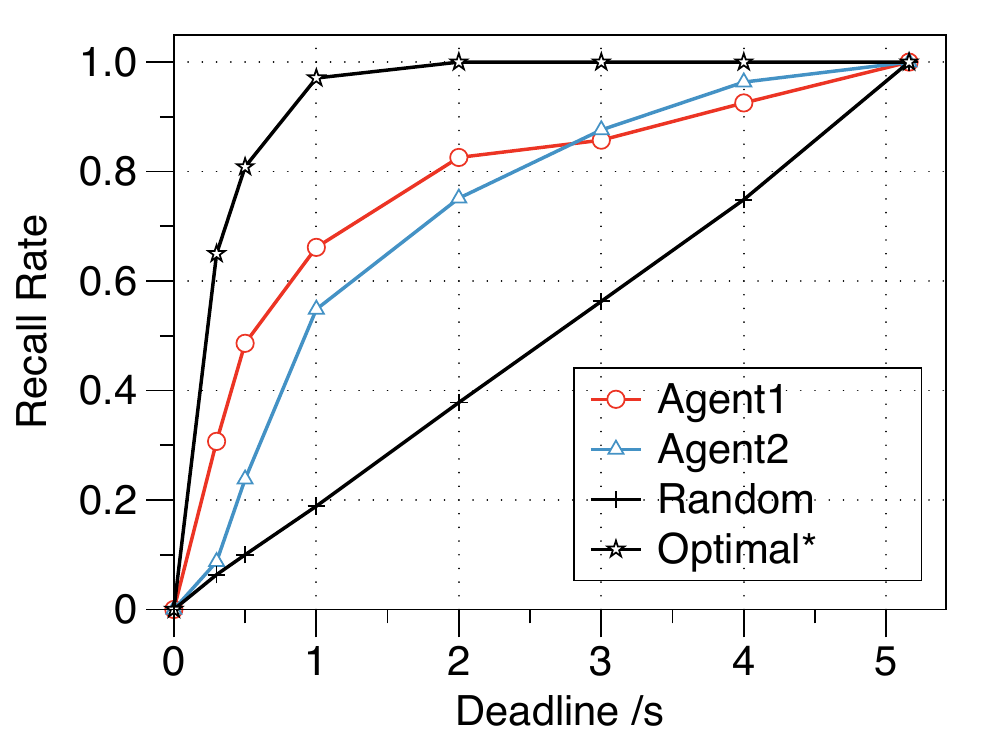}
		\textbf{\small (b) $Dataset2$}
	\end{minipage}
	\vspace{-0.1in}
	\caption{Value recall rate under deadline constraints.}
	\label{fig:ddl2}
\end{figure}

\subsection{Scheduling under~Memory-Deadline~Constraints}
\label{subsec:exp7}

To tackle the more challenging problem, scheduling models under two-dimension knapsack constraints, we propose Algorithm~\ref{alg:ddl-mem}.
Deadline and GPU memory constraints are commonly limited computing resources which are \textit{orthogonal}:
GPU memory restricts the \textit{spatial} size of the parallel deep learning models, while deadline limits the overall running time in \textit{temporal} dimension.
Identically, we use the output value recall rate as the metric.
Random policy and optimal* policy are used as baselines:
{\small
\begin{itemize}
    \item \textbf{Random policy}: randomly selects model that could be packed into GPU to execute until the deadline;
    \item \textbf{Optimal* policy}: greedily selects the model with maximal \\
    $\frac{f(S\cup\{m\},d)-f(S,d)}{m.time * m.mem}$ with relaxing constraints (\S\ref{subsec:algo});
\end{itemize}
}

As a representative, we use the DuelingDQN $Agent_1$ and $Dataset_2$ as the test set, which are the worst cases in our experiments.
As shown in Fig.~\ref{fig:ddl-mem}(a-c), Algorithm~\ref{alg:ddl-mem} significantly improves the output value recall rate compared with random policies.
More specifically, the recall rate of output value is improved by 106.9\% / 52.8\% / 19.5\% under 8GB / 12GB / 16GB GPU memory and 0.8s deadline constraints.
With the increasing memory allocated for learning models, the room for improvement between random policy and optimal* policy shrinks. 
So it is reasonable that the improvement brought by Algorithm~\ref{alg:ddl-mem} is relatively small with 12GB and 16GB memory.
As shown in Fig.~\ref{fig:ddl-mem}(d), the performance ratio of Algorithm~\ref{alg:ddl-mem} to optimal* policy exceeds $1-1/e$ in most cases.

\subsection{Scheduling Overhead}
\label{subsec:exp8}
Experimental results above illustrate the effectiveness of DRL agents and adaptive model scheduling algorithms.
Here, we measure the additional overhead brought by our framework.
Fortunately, as shown in Table~\ref{tab:overhead}, the time cost for making one selection is only around 3-6ms,  which is negligible compared with the execution time of deep learning models (50-400ms).
For memory usage, the DRL agents require about 100MB CPU memory, which is quite lightweight compared with the deployed visual analysis models that consume 500MB to 8GB GPU memory.
\begin{table}[h]
    \vspace{-0.1in}
	\begin{center}
	{\small
		\begin{tabular}{p{1.5cm}|p{2cm}|p{3.5cm}}
			\hline
			&\textbf{DRL Agent}&\textbf{Deep Learning Model}\\
			\hline
			Time& 3-6ms & 50-400ms\\
            \hline
			Memory& 100MB (CPU) & 500-8000MB (GPU)\\
			\hline
		\end{tabular}
		\caption{Computing cost of DRL agent and models.}
		\label{tab:overhead}
	}
	\end{center}
	\vspace{-0.2in}
\end{table}



\section{Related Work}
\label{sec:related}
Towards comprehensive and efficient data labeling, previous work can be divided into two main streams: enhancing one single model's ability and accelerating the model execution.

\subsection{Model Ability Enhancement}
\textbf{Multi-label learning.}
Many industrial applications, including document categorization and automatic image annotation, require to label the raw data with multiple tags.
Multi-label learning is proposed to enable models to output a set of labels for one input data.
The extreme output dimension and complex structures are the main challenges of multi-label learning tasks~\cite{xu2019survey}.
Han et al.~\cite{han2015learning} proposed a collaborative embedding approach, which exploits the association between embedding features and annotations.
Yang et al.~\cite{yang2018complex} designed a NNs which can learn to predict labels and exploit correlation among them simultaneously.  

\textbf{Multi-task learning.}
Another line of work studies multi-task learning (MTL)~\cite{ruder2017overview} that aims to leverage implicit relatedness among different tasks to improve the performance of all of them.
Hard and soft parameter sharing are the most commonly used technologies in MTL.
Kaiser et al.~\cite{kaiser2017one} proposed a deep learning model that can learn multiple tasks on different data types, including image captioning (image data), speech recognition (audio data) and English parsing (text data).
Recent work~\cite{ma2018modeling} designed a method to automatically learn to model the task relationships directly from data, without human expertise.

Multi-label learning and multi-task learning exploit the correlation among output labels and related tasks and successfully enhance a single model's ability.
However, the increasing complexity of industrial tasks, where the more extracted information the better, results in the insufficiency of a single model.
As an effective complement, our proposed adaptive model scheduling framework leverages the power of multiple models with minimum computation waste.

\subsection{Model Execution Acceleration}
\textbf{Model compression.}
Existing literature provides many technologies for deep learning model compression.
Parameter pruning and sharing~\cite{chen2016compressing}
is one popular technology to reduce redundant parameters and operations in deep neural networks.
Another line of work~\cite{wu2017squeezedet}
focuses on designing efficient convolutional filters, which is crucial to the execution speed of visual analysis models.

\textbf{Adaptive model configuration.}
Except for improving the computing efficiency, one recent work~\cite{jiang2018chameleon} found optimization potential from the adaptive model configuration.
They consider configurable variables like the video frame rate, resolution, and neural networks architecture and proposed a framework to adaptively select configurations to accelerate a video analyzing system.

Different from directly accelerating model execution, our proposed adaptive model scheduling avoids executing unnecessary models by learning implicit relationships among different models.
We believe our work is a powerful complement to the existing optimization technologies.

\section{Conclusion \& Future Work}
\label{sec:conclusion}
In this work, we tackled a challenging task, adaptive model scheduling, which works as an effective approach towards comprehensive and efficient data labeling.
We designed a framework, including a novel method to predict unexecuted models' value 
 and adaptive scheduling algorithms to improve the aggregated values of executed models for each data item.
Our extensive evaluations demonstrate that our design achieves significant performance improvement compared with other approaches.  
Several challenging issues are left as future work.
A critical innovative component of our framework is the propose and construction of the model-relationship graph.
Firstly, we would like to design a fast method to construct this efficiently and effectively. 
Secondly, designing scheduling algorithms with theoretical performance guarantees under different constraints remains a very challenging and attractive problem.
Recall that many notoriously difficult to tackle NP-hard problems are special cases of our scheduling problem.
At last, we also need to further evaluate our proposed method on other data types like text and audio.


\balance

\bibliographystyle{abbrv}
\bibliography{main}

\begin{thebibliography}{10}

\bibitem{amos2016openface}
B.~Amos, B.~Ludwiczuk, and M.~Satyanarayanan.
\newblock Openface: A general-purpose face recognition library with mobile
  applications.
\newblock Technical report, CMU-CS-16-118, CMU School of Computer Science,
  2016.

\bibitem{cao2018openpose}
Z.~Cao, G.~Hidalgo, T.~Simon, S.-E. Wei, and Y.~Sheikh.
\newblock Open{P}ose: realtime multi-person 2{D} pose estimation using {P}art
  {A}ffinity {F}ields.
\newblock In {\em arXiv preprint arXiv:1812.08008}, 2018.

\bibitem{carreira2017quo}
J.~Carreira and A.~Zisserman.
\newblock Quo vadis, action recognition? a new model and the kinetics dataset.
\newblock In {\em proceedings of the IEEE CVPR}, pages 6299--6308, 2017.

\bibitem{chen2016compressing}
W.~Chen, J.~Wilson, S.~Tyree, K.~Q. Weinberger, and Y.~Chen.
\newblock Compressing convolutional neural networks in the frequency domain.
\newblock In {\em Proceedings of the 22nd ACM SIGKDD}, pages 1475--1484. ACM,
  2016.

\bibitem{corazza2015q}
M.~Corazza and A.~Sangalli.
\newblock Q-learning and sarsa: a comparison between two intelligent stochastic
  control approaches for financial trading.
\newblock {\em University Ca'Foscari of Venice, Dept. of Economics Research
  Paper Series No}, 15, 2015.

\bibitem{everingham2010pascal}
M.~Everingham, L.~Van~Gool, C.~K. Williams, J.~Winn, and A.~Zisserman.
\newblock The pascal visual object classes (voc) challenge.
\newblock {\em International journal of computer vision}, 88(2):303--338, 2010.

\bibitem{golovin2011adaptive}
D.~Golovin and A.~Krause.
\newblock Adaptive submodularity: Theory and applications in active learning
  and stochastic optimization.
\newblock {\em Journal of Artificial Intelligence Research}, 42:427--486, 2011.

\bibitem{pylearn2_arxiv_2013}
I.~J. Goodfellow, D.~Warde-Farley, P.~Lamblin, V.~Dumoulin, M.~Mirza,
  R.~Pascanu, J.~Bergstra, F.~Bastien, and Y.~Bengio.
\newblock Pylearn2: a machine learning research library.
\newblock {\em arXiv preprint arXiv:1308.4214}, 2013.

\bibitem{han2015learning}
S.~Han, J.~Pool, J.~Tran, and W.~Dally.
\newblock Learning both weights and connections for efficient neural network.
\newblock In {\em Advances in neural information processing systems}, pages
  1135--1143, 2015.

\bibitem{han2008two}
X.~Han, K.~Iwama, and G.~Zhang.
\newblock On two dimensional orthogonal knapsack problem.
\newblock {\em arXiv preprint arXiv:0803.4260}, 2008.

\bibitem{huiskes2008mir}
M.~J. Huiskes and M.~S. Lew.
\newblock The mir flickr retrieval evaluation.
\newblock In {\em Proceedings of the 1st ACM international conference on
  Multimedia information retrieval}, pages 39--43. ACM, 2008.

\bibitem{jiang2018chameleon}
J.~Jiang, G.~Ananthanarayanan, P.~Bodik, S.~Sen, and I.~Stoica.
\newblock Chameleon: scalable adaptation of video analytics.
\newblock In {\em Proceedings of the 2018 Conference of the ACM Special
  Interest Group on Data Communication}, pages 253--266. ACM, 2018.

\bibitem{kaiser2017one}
L.~Kaiser, A.~N. Gomez, N.~Shazeer, A.~Vaswani, N.~Parmar, L.~Jones, and
  J.~Uszkoreit.
\newblock One model to learn them all.
\newblock {\em arXiv preprint arXiv:1706.05137}, 2017.

\bibitem{KhoslaYaoJayadevaprakashFeiFei_FGVC2011}
A.~Khosla, N.~Jayadevaprakash, B.~Yao, and L.~Fei-Fei.
\newblock Novel dataset for fine-grained image categorization.
\newblock In {\em First Workshop on Fine-Grained Visual Categorization, IEEE
  CVPR}, Colorado Springs, CO, June 2011.

\bibitem{kober2013reinforcement}
J.~Kober, J.~A. Bagnell, and J.~Peters.
\newblock Reinforcement learning in robotics: A survey.
\newblock {\em The International Journal of Robotics Research},
  32(11):1238--1274, 2013.

\bibitem{krause2014submodular}
A.~Krause and D.~Golovin.
\newblock Submodular function maximization.

\bibitem{li2017deep}
T.~Li, S.~Gao, and Y.~Xu.
\newblock Deep multi-similarity hashing for multi-label image retrieval.
\newblock In {\em Proceedings of the 2017 ACM on Conference on Information and
  Knowledge Management}, pages 2159--2162. ACM, 2017.

\bibitem{li2018can}
X.-Y. Li, J.~Qian, and X.~Wang.
\newblock Can china lead the development of data trading and sharing markets?
\newblock {\em Communications of the ACM}, 61(11):50--51, 2018.

\bibitem{lin2014microsoft}
T.-Y. Lin, M.~Maire, S.~Belongie, J.~Hays, P.~Perona, D.~Ramanan,
  P.~Doll{\'a}r, and C.~L. Zitnick.
\newblock Microsoft coco: Common objects in context.
\newblock In {\em European conference on computer vision}, pages 740--755.
  Springer, 2014.

\bibitem{ma2018modeling}
J.~Ma, Z.~Zhao, X.~Yi, J.~Chen, L.~Hong, and E.~H. Chi.
\newblock Modeling task relationships in multi-task learning with multi-gate
  mixture-of-experts.
\newblock In {\em Proceedings of the 24th ACM SIGKDD}, pages 1930--1939. ACM,
  2018.

\bibitem{mnih2013playing}
V.~Mnih, K.~Kavukcuoglu, D.~Silver, A.~Graves, I.~Antonoglou, D.~Wierstra, and
  M.~Riedmiller.
\newblock Playing atari with deep reinforcement learning.
\newblock {\em arXiv preprint arXiv:1312.5602}, 2013.

\bibitem{yolov3}
J.~Redmon and A.~Farhadi.
\newblock Yolov3: An incremental improvement.
\newblock {\em arXiv}, 2018.

\bibitem{ruder2017overview}
S.~Ruder.
\newblock An overview of multi-task learning in deep neural networks.
\newblock {\em arXiv preprint arXiv:1706.05098}, 2017.

\bibitem{simon2017hand}
T.~Simon, H.~Joo, I.~Matthews, and Y.~Sheikh.
\newblock Hand keypoint detection in single images using multiview
  bootstrapping.
\newblock In {\em CVPR}, 2017.

\bibitem{simonyan2014very}
K.~Simonyan and A.~Zisserman.
\newblock Very deep convolutional networks for large-scale image recognition.
\newblock {\em arXiv preprint arXiv:1409.1556}, 2014.

\bibitem{sviridenko2004note}
M.~Sviridenko.
\newblock A note on maximizing a submodular set function subject to a knapsack
  constraint.
\newblock {\em Operations Research Letters}, 32(1):41--43, 2004.

\bibitem{van2016deep}
H.~Van~Hasselt, A.~Guez, and D.~Silver.
\newblock Deep reinforcement learning with double q-learning.
\newblock In {\em Thirtieth AAAI}, 2016.

\bibitem{wang2015dueling}
Z.~Wang, T.~Schaul, M.~Hessel, H.~Van~Hasselt, M.~Lanctot, and N.~De~Freitas.
\newblock Dueling network architectures for deep reinforcement learning.
\newblock {\em arXiv preprint arXiv:1511.06581}, 2015.

\bibitem{wu2017squeezedet}
B.~Wu, F.~N. Iandola, P.~H. Jin, and K.~Keutzer.
\newblock Squeezedet: Unified, small, low power fully convolutional neural
  networks for real-time object detection for autonomous driving.
\newblock In {\em CVPR Workshops}, pages 446--454, 2017.

\bibitem{xiu2018poseflow}
Y.~Xiu, J.~Li, H.~Wang, Y.~Fang, and C.~Lu.
\newblock {Pose Flow}: Efficient online pose tracking.
\newblock In {\em BMVC}, 2018.

\bibitem{xu2019survey}
D.~Xu, Y.~Shi, I.~W. Tsang, Y.-S. Ong, C.~Gong, and X.~Shen.
\newblock A survey on multi-output learning.
\newblock {\em arXiv preprint arXiv:1901.00248}, 2019.

\bibitem{yang2018complex}
Y.~Yang, Y.-F. Wu, D.-C. Zhan, Z.-B. Liu, and Y.~Jiang.
\newblock Complex object classification: A multi-modal multi-instance
  multi-label deep network with optimal transport.
\newblock In {\em Proceedings of the 24th ACM SIGKDD}, pages 2594--2603. ACM,
  2018.

\bibitem{yao2011human}
B.~Yao, X.~Jiang, A.~Khosla, A.~L. Lin, L.~Guibas, and L.~Fei-Fei.
\newblock Human action recognition by learning bases of action attributes and
  parts.
\newblock In {\em ICCV}, pages 1331--1338. IEEE, 2011.

\bibitem{zhang2019end}
J.~Zhang, Y.~Liu, K.~Zhou, G.~Li, Z.~Xiao, B.~Cheng, J.~Xing, Y.~Wang,
  T.~Cheng, L.~Liu, et~al.
\newblock An end-to-end automatic cloud database tuning system using deep
  reinforcement learning.
\newblock In {\em Proceedings of the 2019 International Conference on
  Management of Data}, pages 415--432. ACM, 2019.

\bibitem{zhang2018instance}
Z.~Zhang, Q.~Zou, Q.~Wang, Y.~Lin, and Q.~Li.
\newblock Instance similarity deep hashing for multi-label image retrieval.
\newblock {\em arXiv preprint arXiv:1803.02987}, 2018.

\bibitem{zhao2015deep}
F.~Zhao, Y.~Huang, L.~Wang, and T.~Tan.
\newblock Deep semantic ranking based hashing for multi-label image retrieval.
\newblock In {\em Proceedings of the IEEE CVPR}, pages 1556--1564, 2015.

\bibitem{zhou2017places}
B.~Zhou, A.~Lapedriza, A.~Khosla, A.~Oliva, and A.~Torralba.
\newblock Places: A 10 million image database for scene recognition.
\newblock {\em IEEE Transactions on Pattern Analysis and Machine Intelligence},
  2017.

\end{thebibliography}

\end{document}